\newtheorem{theorem}{Theorem}
\newtheorem{lemma}{Lemma}
\newcommand{\Bo}[0]{\mathbb{B}}
\begin{document}

\ecjHeader{x}{x}{xxx-xxx}{2021}{Dynastic Potential Crossover Operator}{F. Chicano, G. Ochoa, D. Whitley and R. Tin\'os}
\title{\bf Dynastic Potential Crossover Operator}  

\author{
\name{\bf Francisco Chicano} \hfill \addr{chicano@lcc.uma.es}\\
\addr{ITIS Software, University of Malaga, Spain}
\AND
\name{\bf Gabriela Ochoa} \hfill \addr{gabriela.ochoa@cs.stir.ac.uk}\\
\addr{University of Stirling, UK}
\AND
\name{\bf L. Darrell Whitley} \hfill \addr{whitley@cs.colostate.edu}\\
\addr{Colorado State University, USA}
\AND
\name{\bf Renato Tin\'os} \hfill \addr{rtinos@ffclrp.usp.br}\\
\addr{University of Sao Paulo, Brazil}
}

\maketitle

\begin{abstract}
An optimal recombination operator for two parent solutions provides the best solution among those that take the value for each variable from one of the parents (gene transmission property). If the solutions are bit strings, the offspring of an optimal recombination operator is optimal in the smallest hyperplane containing the two parent solutions.
Exploring this hyperplane is computationally costly, in general, requiring exponential time in the worst case. However, when the variable interaction graph of the objective function is sparse, exploration can be done in polynomial time. 

In this paper, we present a recombination operator, called Dynastic Potential Crossover (DPX), that runs in polynomial time and behaves like an optimal recombination operator for low-epistasis combinatorial problems. We compare this operator, both theoretically and experimentally, with traditional crossover operators, like uniform crossover and network crossover, and with two recently defined efficient recombination operators: partition crossover and articulation points partition crossover. The empirical comparison uses NKQ Landscapes and MAX-SAT instances. DPX outperforms the other crossover operators in terms of quality of the offspring and provides better results included in a trajectory and a population-based metaheuristic, but it requires more time and memory to compute the offspring.
\end{abstract}

\begin{keywords}
Recombination operator,
dynastic potential,
gray box optimization.
\end{keywords}

\section{Introduction}
\label{sec:intro}
\emph{Gene transmission}~\citep{Radcliffe1994} is a popular property commonly fulfilled by many recombination operators for genetic algorithms.
When the solutions are represented by a set of variables taking values from finite alphabets (possibly different from each other) with no constraints among the variables, this property implies that any variable in any offspring will take the value of the same variable in one of the parents.
In particular, the variables having the same value for both parents will have the same value in all the offspring (i.e., the \emph{respect} property~\citep{Radcliffe1994} is obeyed). The other (differing) variables will take one of the values coming from a parent solution. The gene transmission property is a formalization of the idea that taking (good) features from the parents should produce better offspring. This probably explains why most of the recombination operators try to fulfill this property or some variant of it. 
The set of all the solutions that can be generated by a recombination operator from two parents is called \emph{dynastic potential}. 
If we denote with $h(x,y)$ the Hamming distance (number of differing variables) between two solutions $x$ and $y$, the cardinality of the largest dynastic potential of a recombination operator fulfilling the gene transmission property is $2^{h(x,y)}$. The dynastic potential of uniform crossover has this size. The dynastic potential of single-point crossover has size $2h(x,y)$ because it takes from 1 to $h(x,y)$ consecutive differing bits from any of the two parents. In two-point crossover, two cut points are chosen. If the first one is before the first differing bit or the second one is after the last differing bit, the operator behaves like the single-point crossover. Otherwise, the two cut points are chosen among the $h(x,y)-1$ positions between differing bits and for each cut the bits can be taken from any of the two parents, providing $2\binom{h(x,y)-1}{2}$ additional solutions to the single-point crossover dynastic potential. Thus, the size of the two-point crossover dynastic potential is $2h(x,y)+2\binom{h(x,y)-1}{2}=h(x,y)^2 - h(x,y) + 2$. In general, $z$-point crossover has a dynastic potential of size $\Theta(h(x,y)^z)$ when $z$ is small compared to the number of decision variables, $n$. 

An \emph{optimal recombination operator}~\citep{Eremeev:Kovalenko2013} obtains a best offspring from the largest dynastic potential of a recombination operator fulfilling the gene transmission property, which has size $2^{h(x,y)}$. In the worst case, such a recombination operator is computationally expensive, because finding a best offspring in the largest dynastic potential of an NP-hard problem is an NP-hard problem. A proof of the NP-hardness is given by \citet{Eremeev:Kovalenko2013}, but it can also be easily concluded from the fact that applying an optimal recombination operator to two complementary solutions (e.g., all zeroes and all ones) is equivalent to solving the original problem, and the NP-hardness is derived from the NP-hardness of the original problem. 

We propose a recombination operator, named \emph{Dynastic Potential Crossover} (DPX),
that finds a best offspring of the largest dynastic potential if the objective function $f$ has low epistasis, that is, if the number of nonlinear interactions among variables is small, typically $\Theta(n)$. 
In particular, we assume that the objective function $f$ is defined on $n$ binary variables and has $k$-bounded epistasis. This means that $f$ can be written as a sum of $m$ subfunctions $f_{\ell}$, each one depending on at most $k$ variables:
\begin{equation}
\label{eqn:fn}
f(x) = \sum_{\ell=1}^{m} f_{\ell}(x_{i(\ell,1)},x_{i(\ell,2)},\ldots,x_{i(\ell,k_{\ell})}) ,
\end{equation}
where $i(\ell,j)$ is the index of the $j$-th variable in subfunction $f_{\ell}$ and $k_{\ell} \leq k$. These functions have been named Mk Landscapes by \citet{Whitley2016ecj}. 

DPX has a non-negative integer parameter, $\beta$, bounding the exploration of the dynastic potential. The precise condition to certify that the complete dynastic potential has been explored depends on $\beta$ and will be presented in Section~\ref{subsec:complexity}. The worst-time complexity of DPX is $O(4^{\beta}(n+m)+n^2)$ for Mk Landscapes. 

DPX uses the variable interaction graph of the objective function to simplify the evaluation of the $2^{h(x,y)}$ solutions in the dynastic potential by using dynamic programming. The ideas for this efficient evaluation date back to the basic algorithm by \citet{Hammer1963} for variable elimination and are also commonly used in operations over Bayesian networks~\citep{Bodlaender2005}. DPX requires more than just the fitness values of the parent solutions to do the job and, thus, it is framed in the so-called gray box optimization~\citep{Whitley2016ecj}.

Recently defined crossover operators similar to DPX are \emph{partition crossover} (PX)~\citep{TinosWhitley2015} and \emph{articulation points partition crossover} (APX)~\citep{Chicano2018gecco}. Although they were proposed to work with pseudo-Boolean functions, they can also be applied to the more general representation of variables defined over a finite alphabet. We will follow the same approach here. In the rest of the paper we will focus on a binary representation, where each variable takes value in the set $\Bo=\{0,1\}$. But most of the results, claims and the operator itself can be applied to a more general solution representation, where variables take values from finite alphabets.
PX and APX also use the variable interaction graph of the objective function to efficiently compute a good offspring among a large number of them. 
PX and APX produced excellent performance in different problems~\citep{ChicanoWOT17,Chicano2018gecco,Chen2018gecco,Tinos2018tevc}. When combined with other gray box optimization operators, PX and APX were capable of optimizing instances with one million variables in seconds.
We compare DPX with these two operators from a theoretical and experimental point of view.  The new recombination operator is also compared to uniform crossover and network crossover~\citep{Hauschild2010}, which also uses the variable interaction graph.

This work extends a conference paper~\citep{Chicano2019EvoCOP} where DPX was first presented. The new contributions of this journal version are as follows:
\begin{itemize}
\item We added a theorem to prove the correctness of the dynamic programming algorithm.
\item We revised the text to clarify some terms, add extended explanations of some details of the operator and provide more examples.
\item We provided new experiments to compare the performance of DPX with the one of other crossover operators when they are applied to random solutions. The other crossover operators are uniform crossover, network crossover, partition crossover and articulations point partition crossover.
\item We included DPX in an evolutionary algorithm, to check its behaviour in this algorithm.
\item We compared the five crossover operators inside a trajectory-based and a population-based metaheuristic used to solve two NP-hard pseudo-Boolean optimization problems: NKQ Landscapes and MAX-SAT.
\item We modified the code of DPX to improve its performance and updated the source code repository\footnote{Available at \url{https://github.com/jfrchicanog/EfficientHillClimbers}} with the new code of the evolutionary algorithm, the newly implemented crossover operators used in the comparison and the algorithms to process the results.
\item We added a local optima network analysis \citep{OchoaV-joh18} of DPX included within an iterated local search framework to better understand its working principles.
\end{itemize}

The paper is organized as follows. Section~\ref{sec:background} presents the required background to understand how DPX works. The proposed recombination operator is presented in Section~\ref{sec:dpx}. Section~\ref{sec:experiments} describes the experiments and presents the results and, finally, Section~\ref{sec:conclusions} concludes the paper.

\section{Background}
\label{sec:background}

In our \emph{gray box optimization} setting, the optimizer can independently evaluate the set of $m$ subfunctions in Equation~\eqref{eqn:fn}, i.e., it is able to evaluate $f_\ell$ when the values of $x_{i(\ell,1)} \ldots x_{i(\ell,k_{\ell})}$ are given; and it also knows the variables each $f_\ell$ depends on. This contrasts with \emph{black box optimization}, where the optimizer can only evaluate full solutions $(x_0,...,x_{n-1})$ and get their fitness value $f(x)$. We assume that we do not know the internal details of $f_\ell$ (we can only evaluate it), and this is why we call it gray box and not white box. 

\subsection{Variable interaction graph}
\label{subsec:vig}

The \emph{variable interaction graph} (VIG)~\citep{Whitley2016ecj} is a useful tool that can be constructed under gray box optimization. It is a graph $VIG = (V,E)$, where $V$ is the set of variables and $E$ is the set of edges representing all pairs of variables $(x_i , x_j)$ having \emph{nonlinear interactions}. 
The set $\Bo^n$ contains all the binary strings of length $n$. Given a set of variables $Y$, the notation $1_Y^n$ is used to represent a binary string of length $n$ with 1 in the positions of the variables in $Y$ and zero in the rest, for example, $1^3_{\{0,2\}}=101$. If the set $Y$ has only one element, we will omit the curly brackets in the subscript, e.g., $1^4_3 = 1^4_{\{3\}}=0001$. Observe that the first index of the binary strings is 0.
When the length is clear from the context we omit $n$. The operator $\oplus$ is a bitwise exclusive OR. 

We say that variables $x_i$ and $x_j$ have a nonlinear interaction when the expression $\Delta_{i}f(x) = f(x \oplus 1_{i})-f(x)$ does depend on $x_j$.
Checking the dependency of $\Delta_{i}f(x)$ on $x_j$ can be computationally expensive because it requires to evaluate the expression on all the strings in $\Bo^{n-1}$. 
There are other approaches to find the nonlinear interactions among variables. First, assuming that every pair of variables appearing together in a subfunction has a nonlinear interaction. 
It is not necessarily true that there is a nonlinear interaction among variables appearing as arguments in the same subfunction, but adding extra edges to $E$ does not break the correctness of the operators based on the VIG and requires only a very simple check that is computationally cheap. The graph obtained this way is usually called \emph{co-ocurrence graph}.
A second, and precise, approach to determine the nonlinear interactions is to apply the Fourier transform~\citep{Terras1999}, and then look at every pair of variables to determine if there is a nonzero Fourier coefficient associated to a term with the two variables. This second method is precise and can be done in $\Theta(m 4^{k})$ time if we know the variables appearing in each subfunction $f_\ell$ and we can evaluate each $f_\ell$ independently (our gray box setting). The interested reader can see the work of~\citet{Whitley2016ecj} and~\citet{Rana1998} for a more detailed description of this second approach.

The first approach is specially useful when $k$ is relatively large, because it requires time $\Theta(m k^2)$, which is polynomial in $k$, in contrast to the exponential time in $k$ of the second approach. In some problems, like MAX-SAT, we know that the co-ocurrence graph (obtained by the first approach) is exactly the variable interaction graph (obtained by the second approach). In some other problems, like NK Landscapes, both are the same with high probability. In these two cases, it makes sense to use the first approach and work with the co-ocurrence graph even in the case that $k$ is small enough to compute the Fourier transform in a reasonable time. In the rest of the paper when we use the term variable interaction graph we could replace it by the co-ocurrence graph.

An example of the construction of the variable interaction graph for a function with $n = 18$ variables (numbered from 0 to 17) and $k=3$ is given below. We will refer to the variables using numbers,  e.g., $9 = x_9$. The objective function is the sum over the following 18 subfunctions:


\begin{center}
\begin{tabular} {llll}
 $f_0(0, 6, 14)$ &   $f_5(5, 4, 2)$    & $f_{10}(10, 2, 17) $   & $f_{15}(15, 7, 13) $   \\
 $f_1(1, 0, 6)$  &   $f_6(6, 10, 13)$  & $f_{11}(11, 16, 17) $  & $f_{16}(16, 9, 11) $   \\
 $f_2(2, 1, 6)$  &   $f_7(7, 12, 15)$  & $f_{12}(12, 10, 17) $  & $f_{17}(17, 5, 2) $   \\
 $f_3(3, 7, 13)$ &   $f_8(8,  3, 6)$   & $f_{13}(13, 12, 15) $  \\
 $f_4(4, 1, 14)$ &   $f_9(9, 11, 14)$  & $f_{14}(14, 4, 16) $  \\
\end{tabular}
\end{center}

From these subfunctions, assume we extract the nonlinear interactions that are shown in Figure~\ref{fig:vig}. In this example, every pair of variables that appear together in a subfunction has a nonlinear interaction.  

\begin{figure}[!ht]
\centering
\tikzstyle{estilo}=[circle,draw=black]
%
\begin{tikzpicture}[scale=0.50, every node/.style={scale=0.8}]
\node[estilo] (x0) at (6,1) {$0$};
\node[estilo] (x1) at (6,2.5) {$1$};
\node[estilo] (x2) at (6,4) {$2$};
\node[estilo] (x3) at (14,2) {$3$};
\node[estilo] (x4) at (4,2) {$4$};
\node[estilo] (x5) at (4,4) {$5$};
\node[estilo] (x6) at (8,1) {$6$};
\node[estilo] (x7) at (14,7) {$7$};
\node[estilo] (x8) at (10,0.5) {$8$};
\node[estilo] (x9) at (2,1.5) {$9$};
\node[estilo] (x10) at (9,4) {${10}$};
\node[estilo] (x11) at (0,2) {${11}$};
\node[estilo] (x12) at (10,6) {${12}$};
\node[estilo] (x13) at (12,4) {${13}$};
\node[estilo] (x14) at (4,-0.5) {${14}$};
\node[estilo] (x15) at (12,5.5) {${15}$};
\node[estilo] (x16) at (2,3) {${16}$};
\node[estilo] (x17) at (4,6) {${17}$};
\draw (x0) -- (x1);
\draw (x0) -- (x6);
\draw (x0) -- (x14);
\draw (x1) -- (x2);
\draw (x1) -- (x4);
\draw (x1) -- (x6);
\draw (x1) -- (x14);
\draw (x2) -- (x4);
\draw (x2) -- (x5);
\draw (x2) -- (x6);
\draw (x2) -- (x10);
\draw (x2) -- (x17);
\draw (x3) -- (x6);
\draw (x3) -- (x7);
\draw (x3) -- (x8);
\draw (x3) -- (x13);
\draw (x4) -- (x5);
\draw (x4) -- (x14);
\draw (x4) -- (x16);
\draw (x5) -- (x17);
\draw (x6) -- (x8);
\draw (x6) -- (x10);
\draw (x6) -- (x13);
\draw (x6) -- (x14);
\draw (x7) -- (x12);
\draw (x7) -- (x13);
\draw (x7) -- (x15);
\draw (x9) -- (x11);
\draw (x9) -- (x14);
\draw (x9) -- (x16);
\draw (x10) -- (x12);
\draw (x10) -- (x13);
\draw (x10) -- (x17);
\draw (x11) -- (x17);
\draw (x11) -- (x16);
\draw (x11) -- (x14);
\draw (x12) -- (x13);
\draw (x12) -- (x15);
\draw (x12) -- (x17);
\draw (x13) -- (x15);
\draw (x14) -- (x16);
\draw (x16) -- (x17);
\end{tikzpicture}
\caption{Sample variable interaction graph (VIG).} 
\label{fig:vig}
\end{figure}
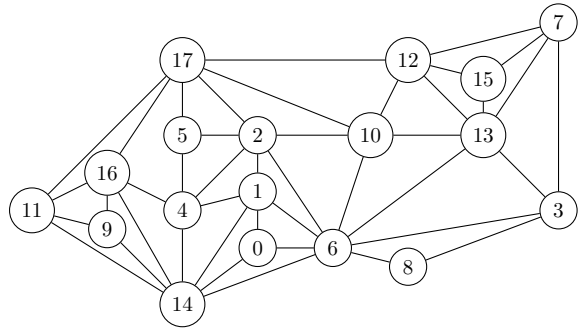

\subsection{Recombination Graph}
\label{subsec:recombination}

Let us assume that we have two solutions to recombine. We call these two solutions  {\emph{red}} and {\emph{blue}} parents. All the variables with the same value in both parents will also share the same value in the offspring and the solutions in the dynastic potential will be in a hyperplane determined by the common variables. For example, let the two parents be 
\begin{align*}
\text{red} &= 000000000000000000, \\
\text{blue} &= 111101011101110110,
\end{align*}
in our sample function presented in Section~\ref{subsec:vig}. Therefore,  $x_4$, $x_6$, $x_{10}$, $x_{14}$, and $x_{17}$ are identical in both parents. The rest of the variables are different. Both parents reside in a hyperplane denoted by $H={*}{*}{*}{*}0{*}0{*}{*}{*}0{*}{*}{*}0{*}{*}0$ where $*$ denotes the variables that are different in the two solutions, and $0$ marks the positions where they have the same variable values.

We use the hyperplane $H={*}{*}{*}{*}0{*}0{*}{*}{*}0{*}{*}{*}0{*}{*}0$ to decompose the VIG in order to produce a {\em recombination graph}.  We remove all the variables (vertices) that have the same ``shared variable assignments'' and also remove all edges that are incident on the vertices corresponding to these variables. This produces the  recombination graph shown in Figure~\ref{fig:recom2}.

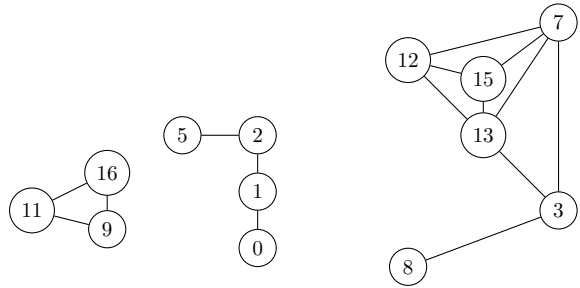
\begin{figure}[!ht]
    \centering
\tikzstyle{estilo}=[circle,draw=black]
%
\begin{tikzpicture}[scale=0.50, every node/.style={scale=0.8}]
\node[estilo] (x0) at (6,1) {$0$};
\node[estilo] (x1) at (6,2.5) {$1$};
\node[estilo] (x2) at (6,4) {$2$};
\node[estilo] (x3) at (14,2) {$3$};
\node[estilo] (x5) at (4,4) {$5$};
\node[estilo] (x7) at (14,7) {$7$};
\node[estilo] (x8) at (10,0.5) {$8$};
\node[estilo] (x9) at (2,1.5) {$9$};
\node[estilo] (x11) at (0,2) {${11}$};
\node[estilo] (x12) at (10,6) {${12}$};
\node[estilo] (x13) at (12,4) {${13}$};
\node[estilo] (x15) at (12,5.5) {${15}$};
\node[estilo] (x16) at (2,3) {${16}$};
\draw (x0) -- (x1);
\draw (x1) -- (x2);
\draw (x2) -- (x5);
\draw (x3) -- (x7);
\draw (x3) -- (x8);
\draw (x3) -- (x13);
\draw (x7) -- (x12);
\draw (x7) -- (x13);
\draw (x7) -- (x15);
\draw (x9) -- (x11);
\draw (x9) -- (x16);
\draw (x11) -- (x16);
\draw (x12) -- (x13);
\draw (x12) -- (x15);
\draw (x13) -- (x15);
\end{tikzpicture}
\caption{Recombination graph for the solutions (parents)
$\text{red} = 000000000000000000$  and $\text{blue} =  111101011101110110$.}
\label{fig:recom2}
\end{figure}

The recombination graph also defines a reduced evaluation function.  This new evaluation function is linearly separable, 
and decomposes into $q \leq m$ subfunctions defined over the connected components of the recombination graph. In our example:
\[ g(x') = a + g_1(9, 11, 16) + g_2(0, 1, 2, 5) + g_3(3, 7, 8, 12, 13, 15)  ,\]
where $g(x') = f|_H(x')$ and $x'$ are solutions restricted to a subspace of the hyperplane $H$ that contains 
the parent strings as well as the full dynastic potential.  
The constant $a = f(x') - \sum_{i=1}^{3} g_i(x')$ depends on the common variables. 
Every recombination graph with $q$ connected components induces a new {\em separable} function $g(x')$ that is defined as:
\begin{equation}
\label{eqn:px-eqn}
g(x') = a + \sum_{i=1}^{q} g_i(x') .
\end{equation}

Partition crossover (PX), defined by~\cite{TinosWhitley2015}, generates an offspring, when recombining two parents, based on the recombination graph: all of the variables in the same recombining component in the recombination graph are inherited together from one of the two parents.
Partition crossover selects the decision variables from one or another parent yielding the best partial evaluation for each subfunction $g_i(x')$. This way, PX obtains a best offspring among $2^q$ in $O(q)$ time, which is a remarkable result. The efficiency of PX depends on the number of connected components $q$ in the recombination graph. Larger values for $q$ provide a better performance. We wonder if this number can be large in pseudo-Boolean problems with interest in practice. We provide a positive answer with the VIG and recombination graph in Figure~\ref{fig:maxsat-recomb}. It shows a sample recombination graph with \numprint{1087} connected components of a real SAT instance of the 2014 Competition.\footnote{ \url{http://www.satcompetition.org/2014/}} The graph was generated by~\citet{ChenWhitley2017}.

\begin{figure}
\centering
\includegraphics[scale=1]{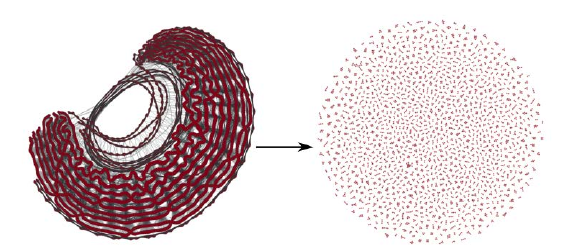}
\caption{Variable interaction graph (left) and recombination graph (right), generated by \citet{ChenWhitley2017}, for instance atco\_enc3\_opt1\_13\_48 (\numprint{1067657} variables) from the SAT Competition 2014. The recombination graph contains 1087 connected components.}
\label{fig:maxsat-recomb}
\end{figure}

Articulation points partition crossover (APX)~\citep{Chicano2018gecco} goes further and finds the \emph{articulation points} of the recombination graphs. Articulation points are variables whose removal increases the number of connected components. 
A \emph{bi-connected} component of a graph is a maximal subgraph with the property that  there are two vertex-disjoint paths between any two vertices. Articulation points join several bi-connected components.
Variables $x_1$, $x_2$ and $x_3$ are articulation points in our example (see Figure~\ref{fig:recom2}) and the subgraphs induced by the vertex sets $\{x_5, x_2\}$ and $\{x_3, x_7, x_{12}, x_{13}, x_{15}\}$ are examples of bi-connected components. Then, APX efficiently simulates what happens when the articulation points are removed, one at a time, from the recombination graph by flipping the variable in each of the parent solutions before applying PX, and the best solution is returned as offspring. In our example, APX would work as follows. First, it applies PX to the red solution and a copy of the blue solution where the variable $x_1$ is flipped and stores the best children. Then, it applies PX to the blue solution and a copy of the red solution with variable $x_1$ flipped. The same process is repeated with flips in variables $x_2$ and $x_3$ (the other articulation points). Finally, it applies PX to the original red and blue solutions. APX returns the best solution of all the applications of PX. The key ingredient of APX is that all these computations do not require repeated applications of PX. With the appropriate data structures, all the computations can be done in $O(n^2+m)$, the same complexity of PX, for any choice of parents in Mk Landscapes.

\section{Dynastic potential exploration}
\label{sec:dpx}

The proposed dynastic potential crossover operator (DPX) takes the ideas of PX and APX even further. DPX starts from the recombination graph, like the one in Figure~\ref{fig:recom2}. Then, DPX tries to exhaustively explore all the possible combinations of the parent values in the variables of each connected component to find the optimal recombination regarding the hyperplane $H$.
This exploration is not done by brute force, but using dynamic programming. Following with our example, in order to compute the best combination for the variables $x_9$, $x_{11}$ and $x_{16}$, we need to enumerate the 8 ways of taking each variable from each parent, and this is not better than brute force. However, the component containing variables $x_0$, $x_1$, $x_2$ and $x_5$ forms a path. In this case, we can store in a table what is the best option for variable $x_0$ when any of the two possible values for variable $x_1$ are selected. Then, we can store in the same table what is the value of the sum of subfunctions depending only on $x_0$ and $x_1$ (and possibly common variables eliminated in the recombination graph). After this step, we can consider that variable $x_0$ has been removed from the problem, and we can proceed in the same way with the rest of the variables in order: $x_1$, $x_2$ and $x_5$. Finally,  12 evaluations  are necessary, instead of the 16 required by brute force. In general, for a path of length $\ell$, dynamic programming requires $4(\ell-1)$ evaluations while brute force requires $2^\ell$ evaluations.

The idea of variable elimination using dynamic programming dates back to the 1960's and  the basic algorithm by~\cite{Hammer1963}. 
The problem of variable elimination has also been studied in other contexts, like Gaussian elimination~\citep{Tarjan:Yannakis1984} and Bayesian networks~\citep{Bodlaender2005}. In fact, we utilize the ideas for computing the \emph{junction tree} in Bayesian networks. First, a \emph{chordal graph} is obtained from the recombination graph using the \emph{maximum cardinality search}  and a \emph{fill-in procedure} to add the missing edges. 
Then the \emph{clique tree} (or junction tree) is computed, which will fix the order in which the variables are eliminated using dynamic programming.
After assigning the subfunctions to the cliques in the clique tree, dynamic programming is applied to find a best offspring, which is later reconstructed using the information computed in tables during dynamic programming. The runtime of the variable elimination method depends, among others, on the number of missing edges added by the fill-in procedure. Unfortunately, finding the minimum fill-in is an  NP-hard problem~\citep{Bodlaender2005}. Thus, we do not try to eliminate the variables in the most efficient way, but we apply algorithms that are efficient in finding a variable elimination order. 
Our proposal, DPX, is the first, to the best of our knowledge, applying these well-known ideas to design a recombination operator. There is also a difference between our approach and the variable elimination methods in the literature: we introduce a parameter $\beta$ to limit the exploration of the variables (see Section~\ref{subsec:complexity}). The high-level pseudocode of DPX is outlined in Algorithm~\ref{alg:dpx}. In the next subsections we will detail each of these steps.

\begin{algorithm}[!ht]
\caption{Pseudocode of DPX}
\label{alg:dpx}
\KwData{two parents $x$ and $y$}
\KwResult{one offspring $z$}
Compute the recombination graph of $x$ and $y$ as in PX \citep{TinosWhitley2015}\;
\label{lin:recombination-graph}
Apply maximum cardinality search~\citep{Tarjan:Yannakis1984}\;
\label{lin:mcs}
Apply the fill-in procedure to make the graph chordal~\citep{Tarjan:Yannakis1984}\;
\label{lin:fillin}
Apply the clique tree construction procedure~\citep{Galinier1995}\;
\label{lin:clique-tree}
Assign subfunctions to cliques in the clique tree\;
\label{lin:subfunctions-assignment}
Apply dynamic programming to find the offspring (see Algorithm~\ref{alg:dynp})\;
\label{lin:dynp}
Build $z$ using the tables filled by dynamic programming\;
\label{lin:reconstruction}
\end{algorithm}

\subsection{Chordal graphs}
\label{subsec:chordal}

In Algorithm~\ref{alg:dpx}, after finding the recombination graph (Line~\ref{lin:recombination-graph}), each connected component is transformed into a chordal graph (Lines~\ref{lin:mcs} and~\ref{lin:fillin}), if it is not already one. 
A \emph{chordal graph} is a graph where all the cycles of length four or more have a chord (edge joining two nodes not adjacent in the cycle). All the connected components in Figure~\ref{fig:recom2} are chordal graphs. \citet{Tarjan:Yannakis1984} provided algorithms to test if a graph is chordal and add new edges to make it chordal if it is not. Their algorithms run in time $O(n+e)$, where $n$ is the number of nodes in the graph and $e$ is the number of edges. In the worst case the complexity is $O(n^2)$. The first step to check the chordality is to number the nodes using \emph{maximum cardinality search} (MCS). This algorithm numbers each node in descending order, choosing in each step the unnumbered node with more numbered neighbors and solving the ties arbitrarily. The number associated to node $u$ is denoted with $\gamma(u)$. Figure~\ref{fig:mcs} (left) shows the result of applying MCS to the third connected component of Figure~\ref{fig:recom2}, where we started numbering node $12$.

\begin{figure}[!ht]
\centering
\tikzstyle{estilo}=[circle,draw=black]
\tikzstyle{numbers}=[text=red]
\tikzstyle{clique}=[rectangle,draw=black,align=left]
%
\begin{tikzpicture}[scale=0.50, every node/.style={scale=0.8}]
\node[estilo] (x3) at (14,2) {$3$};
\node[estilo] (x7) at (14,7) {$7$};
\node[estilo] (x8) at (10,0.5) {$8$};
\node[estilo] (x12) at (10,6) {${12}$};
\node[estilo] (x13) at (12,4) {${13}$};
\node[estilo] (x15) at (12,5.5) {${15}$};
\node[numbers] (l3) at (14.5,2.5) {$2$};
\node[numbers] (l7) at (14.5,6.5) {$5$};
\node[numbers] (l8) at (9.4,1) {$1$};
\node[numbers] (l12) at (9.3,6.5) {$6$};
\node[numbers] (l13) at (11.3,3.5) {$3$};
\node[numbers] (l15) at (11.3,6) {$4$};
\draw (x3) -- (x7);
\draw (x3) -- (x8);
\draw (x3) -- (x13);
\draw (x7) -- (x12);
\draw (x7) -- (x13);
\draw (x7) -- (x15);
\draw (x12) -- (x13);
\draw (x12) -- (x15);
\draw (x13) -- (x15);
\end{tikzpicture} \hspace{1cm}
\begin{tikzpicture}[scale=0.50, every node/.style={scale=0.8}]
\node[clique] (c1) at (0,10) {$C_1= \{7,12,13,15\}$\\$S_1=\emptyset$\\$R_1=\{7,12,13,15\}$};
\node[clique] (c2) at (0,7) {$C_2= \{3,7,13\}$\\$S_2=\{7,13\}$\\$R_2=\{3\}$};
\node[clique] (c3) at (0,4) {$C_3= \{3,8\}$\\$S_3=\{3\}$\\$R_3=\{8\}$};
\draw [->] (c2) -- (c1);
\draw [->] (c3) -- (c2);
\end{tikzpicture}
\caption{Maximum cardinality search applied to the third connected component of Figure~\ref{fig:recom2} (left) and clique tree with the sets $S_i$ and $R_i$ (right).}
\label{fig:mcs}
\end{figure}
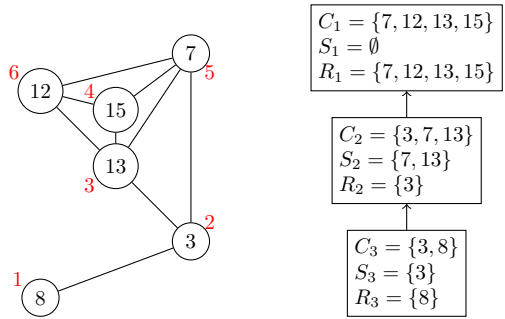

If the graph is chordal then MCS will provide a numbering of the nodes $\gamma$ such that for each triple of nodes $u$, $v$ and $w$, with $(u,v), (u,w) \in E$ and $\gamma(u) < \min\{\gamma(v),\gamma(w)\}$, it happens that $(v,w) \in E$. If this is not the case, the graph is not chordal. A \emph{fill-in} algorithm tests this condition and adds the required edges to make the graph chordal. This algorithm runs in $O(n+e')$ time, where $e'$ is the number of edges in the final chordal graph. Again, in the worst case, the complexity is $O(n^2)$. These two steps, MCS and fill-in, can be applied to each connected component separately or to the complete recombination graph with the same result~\citep{Tarjan:Yannakis1984}.

\subsection{Clique Tree}
\label{subsec:clique-tree}

Dynamic programming is used to exhaustively explore all the variables in each clique\footnote{We will use the term \emph{clique} to refer to a maximal complete subgraph, as the cited literature does. However, the term clique is sometimes used to refer to any complete subgraph (not necessarily maximal).} in the chordal graph. The maximum size of a clique in the chordal graph is an upper bound of its treewidth, and determines the complexity of applying dynamic programming to find the optimal solution. A clique tree of a chordal graph is a tree where the nodes are cliques and for any variable appearing in two of such cliques, the path among the two cliques in the tree is composed of cliques containing the variable (junction tree property). We can also identify a clique tree with a tree-decomposition of the chordal graph~\citep{Bodlaender2005}. This clique tree will determine the order in which the variables can be eliminated. 

Starting from the chordal graph provided in the previous steps, we apply an algorithm by \citet{Galinier1995} to find the clique tree $T$ (Line~\ref{lin:clique-tree} in Algorithm~\ref{alg:dpx}). This algorithm runs in $O(n+e')$ time and finds all the $O(n)$ cliques of the chordal graph. In a chordal graph, the number of cliques cannot exceed the number of nodes $n$ of the graph~\citep{Galinier1995}. 
The cliques will be identified with the sets of variables it contains, $C_i$, where $i$ is an integer index for the clique that increases when a clique is discovered by the algorithm. An edge joining two cliques in the clique tree can be labeled with a \emph{separator}, which is the intersection of the variables in both cliques. A clique $C_i$ is parent of a clique $C_j$ if they are joined by an edge and $i<j$. The set of child cliques of a clique $i$ is denoted with $ch(i)$. Although separators are associated to edges, in each clique $C_i$ we highlight a particular separator, the separator with its parent clique, and we will use $S_i$ to denote it. If a clique $C_i$ has no parent, then $S_i=\emptyset$.
The \emph{residue}, $R_i$, in a clique $C_i$ is the set of variables of $C_i$ that are not in the separator with its parent, $S_i$. In each clique $C_i$, the residue, $R_i$, and the separator with the parent, $S_i$, forms a partition of the variables in $C_i$. Due to the junction tree property, for each variable $x_i$, the cliques that contain it forms a connected component in the clique tree $T$. The variable is in the set $S_j$ of all the cliques $j$ in the connected component except in the ancestor of all of them (with the lowest index $j$), where $x_i$ is member of its residue $R_j$. Thus, each variable is only in the residue of one clique. 
In Figure~\ref{fig:mcs} (right) the residues, $R_i$, and separators with the parent, $S_i$, for all the cliques of the third connected component of Figure~\ref{fig:recom2} are shown.

After computing the clique tree, all the subfunctions $f_{\ell}$ depending on a nonempty set $V_{\ell}^d$ of differing variables must be assigned to one (and only one) clique $i$ where $V_{\ell}^d \subseteq C_i$ (Line~\ref{lin:subfunctions-assignment} in Algorithm~\ref{alg:dpx}). They will be evaluated when this clique is processed. There can be more than one clique where the subfunction can be assigned. 
All of them are valid for a correct evaluation, but the clique with less variables is preferred to reduce the runtime. 
We denote with $F_{i}$ the set of subfunctions assigned to clique $C_i$. 

An optimal offspring is found in Algorithm~\ref{alg:dynp} by exhaustively exploring all the variable combinations in each clique $C_i$ and storing the best ones. Before describing the algorithm we need to introduce some additional notation. 
The operator $\wedge$ is a bitwise AND and the expression $\Bo^n \wedge 1_Y$ denotes the set of binary strings of length $n$ with zero in all the variables not in $Y$. 

For each combination of the variables in the separator $S_i$ (Line~\ref{lin:si-iteration} in Algorithm~\ref{alg:dynp}), all the combinations of the variables in the residue $R_i$ are considered (Line~\ref{lin:ri-iteration} in Algorithm~\ref{alg:dynp}) and evaluated over the subfunctions assigned to the clique (Lines~\ref{lin:subfn-evaluation-start}-\ref{lin:subfn-evaluation-end}) and their child cliques (Lines~\ref{lin:children-evaluation-start}-\ref{lin:children-evaluation-end}). Then, the best combination for the residue $R_i$ is stored in the \texttt{variable}[$i$] array\footnote{What is really stored in Algorithm~\ref{alg:dynp} is the change $v$ of the variables in $R_i$ over the parent solution $x$, but this can be considered an implementation detail.} (Line~\ref{lin:variable-assignment}) and its value in the \texttt{value}[$i$] array (Line~\ref{lin:value-assignment}). The evaluation in post-order makes it possible to have the \texttt{value}[$j$] array of the child cliques filled when they are evaluated in Line~\ref{lin:child-value-eval}.
At this point, variables in the residue $R_i$ can be obviated (eliminated) in the rest of the computation.
When the separator $S_i=\emptyset$, $w$ only takes one value, the string with $n$ zeroes. This happens in the root of the clique tree, and its effect is to iterate over all the variables combinations for $R_i=C_i$ to find the best value.
The \texttt{variable} array will be used in the reconstruction of the offspring solution (Line~\ref{lin:reconstruction} in Algorithm~\ref{alg:dpx}). 

\begin{algorithm}[!ht]
\caption{Optimal offspring computation}
\label{alg:dynp}
\KwData{Clique tree $T$ with all cliques $C_i$, parent solution $x$}
\KwResult{Arrays variable and value}
\For{$C_i \in T$ in post-order}{
	\For{$w \in \Bo^n \wedge 1_{S_i}$}{
		\label{lin:si-iteration}
    	value[$i$][$w$] = $-\infty$\;
    	\For{$v \in \Bo^n \wedge 1_{R_i}$}{
    		\label{lin:ri-iteration}
        	aux = 0\;
        	\For{$f \in F_{i}$}{
        	\label{lin:subfn-evaluation-start}
            	aux = aux + $f(x \oplus w \oplus v)$\;
            }
            \label{lin:subfn-evaluation-end}
            \For{$j \in ch(i)$}{
            \label{lin:children-evaluation-start}
            	aux = aux + value[$j$][$(w \oplus v) \wedge 1_{S_j}$]\;
            	\label{lin:child-value-eval}
            }
            \label{lin:children-evaluation-end}
            \If{{\rm aux} $>$ {\rm value[}$i${\rm ][}$w${\rm ]}}{
            	variable[$i$][$w$] = $v$\;
                \label{lin:variable-assignment}
            	value[$i$][$w$] = aux\;
            	\label{lin:value-assignment}
            }
        }
    }
    \label{lin:si-iteration-end}
}
\end{algorithm}

Following with our previous example in Figure~\ref{fig:mcs}, first, clique $C_3$ is evaluated. For all the values of $x_3$ (variable in $S_3$), and all the values of $x_8$ (variable in $R_3$) the subfunctions in $F_{3}$ are evaluated. All these subfunctions depend only on $x_3$, $x_8$ and variables with common values in both parents. For each value $x_3=0,1$, the array \texttt{value}[3] is filled with the maximum value of the sum of the subfunctions in $F_{3}$ for the two values of $x_8=0,1$. The array \texttt{variable}[3] will store the value of $x_8$ for which the maximum is obtained in each case.
After $C_3$ has been evaluated, variable $x_8$ is eliminated. Now clique $C_2$ is evaluated, and \texttt{value}[2] and \texttt{variable}[2] are filled for each combination of $x_7$ and $x_{13}$. In this case the variable to eliminate is $x_3$ and the evaluation also includes the values in \texttt{value}[3] in addition to the functions in $F_{2}$, because $C_3$ is a child clique of $C_2$. Finally, the root clique $C_1$ is evaluated and all the $2^4$ possible combinations of variables $x_7$, $x_{12}$, $x_{13}$ and $x_{15}$ are evaluated using the subfunctions in $F_{1}$ and the array \texttt{value}[2] to find the objective value of an optimal offspring. The offspring itself is built using the arrays \texttt{variable}. In particular, \texttt{variable}[1] will store the combination of variables $x_7$, $x_{12}$, $x_{13}$ and $x_{15}$ that produces the best offspring. Array \texttt{variable}[2] will provide the value of $x_3$ given the ones of $x_7$ and $x_{13}$, which were provided by \texttt{variable}[1]. Array \texttt{variable}[3] will provide the value for $x_8$ given the value of $x_3$ provided by \texttt{variable}[2].

\begin{theorem}
Given two parent solutions $x$ and $y$ with differing set of variables $d(x,y)$ that produces clique tree $T$, Algorithm~\ref{alg:dynp} computes a best offspring $z$ in the largest dynastic potential of $x$ and $y$. That is:
\begin{equation}
\label{eqn:dynp}
f(z)=\max_{w \in \Bo^n \wedge 1_{d(x,y)}} f(x \oplus w).
\end{equation}
\end{theorem}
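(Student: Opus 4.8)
The plan is to prove correctness by structural induction on the clique tree $T$, traversed bottom-up in post-order exactly as Algorithm~\ref{alg:dynp} does. The core of the argument is a loop invariant describing the meaning of the \texttt{value}[$i$] array. For each clique $C_i$ let $T_i$ denote the subtree of $T$ rooted at $C_i$, let $F(T_i)=\bigcup_{C_j \in T_i} F_j$ be all subfunctions assigned to cliques in this subtree, and let $R(T_i)=\bigcup_{C_j \in T_i} R_j$ be the variables owned by the residues of $T_i$. I would prove the invariant: for every $w \in \Bo^n \wedge 1_{S_i}$,
\[
\texttt{value}[i][w] = \max_{v \in \Bo^n \wedge 1_{R(T_i)}} \sum_{f \in F(T_i)} f(x \oplus w \oplus v),
\]
that is, \texttt{value}[$i$][$w$] is the best partial fitness obtainable from all subfunctions in the subtree, maximized over all assignments to the variables owned by that subtree, once the separator variables have been fixed according to $w$.

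First I would establish the structural facts that make the recursion sound. The residues $\{R_j\}$ partition the differing variables $d(x,y)$ (each variable lies in exactly one residue, as stated in Section~\ref{subsec:clique-tree}), and each subfunction $f_\ell$ with $V_\ell^d \neq \emptyset$ is assigned to exactly one clique whose variable set contains $V_\ell^d$; hence $\sum_{f \in F(T_{\text{root}})}$ counts every relevant subfunction exactly once, and each such $f$ depends only on variables of its host clique (all fixed by $w\oplus v$). I would then prove the key \emph{separation lemma}: if $x_p \in R_j$ for some $C_j \in T_i$ and $x_p \notin S_i$, then no subfunction assigned outside $T_i$ depends on $x_p$. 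This follows from the junction tree (running intersection) property: the cliques containing $x_p$ form a connected subtree rooted at the unique clique $C_j$ with $x_p \in R_j$; since $C_j \in T_i$, every clique containing $x_p$ is a descendant of $C_j$ and therefore also lies in $T_i$, so any subfunction depending on $x_p$ is assigned within $T_i$. This lemma is what lets the optimization over $R(T_j)$ be carried out independently inside each child subtree and then combined through the separator values alone.

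With these facts the inductive step is a direct reading of the algorithm. For a clique $C_i$ with children $ch(i)$, the residues partition so that $R_i = R(T_i) \setminus \bigcup_{j \in ch(i)} R(T_j)$ and the sets $R(T_j)$ for distinct children are pairwise disjoint. By the separation lemma, once $S_i$ is fixed the contribution of each child subtree involves free variables disjoint from those of the other children and from $R_i$; each such contribution equals $\texttt{value}[j][(w\oplus v)\wedge 1_{S_j}]$, correct by the induction hypothesis because $S_j \subseteq C_i = S_i \cup R_i$, so $(w \oplus v)$ already specifies the $S_j$ variables and the look-up index is well defined. Maximizing over $v \in \Bo^n \wedge 1_{R_i}$ (Line~\ref{lin:ri-iteration}) therefore realizes the full maximization over $R(T_i)$ in the invariant. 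At the root the separator is empty, $w=0^n$ and $R(T_{\text{root}})=d(x,y)$, so the invariant gives $\texttt{value}[\text{root}][0^n]=\max_{v \in \Bo^n \wedge 1_{d(x,y)}} \sum_{f \in F} f(x \oplus v)$, where the sum runs over all subfunctions with differing variables. The subfunctions with $V_\ell^d=\emptyset$ are independent of $v$ and contribute the constant $a$, so this maximum equals $\max_{w} f(x \oplus w) - a$; reconstructing $z$ from the \texttt{variable} arrays along the same post-order recovers an argmax, yielding Equation~\eqref{eqn:dynp}.

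I expect the main obstacle to be the separation lemma together with the bookkeeping around it: stating precisely which variables are free at each clique, verifying that these sets are disjoint across sibling subtrees, and confirming that fixing $S_i$ decouples the child subproblems so that a product of independent maximizations equals the joint maximization. The running-intersection property does the real work, but making the no-double-counting and disjointness claims airtight—and in particular the inclusion $S_j \subseteq S_i \cup R_i$ that keeps the child look-up index valid—requires careful use of the clique-tree definitions from Section~\ref{subsec:clique-tree}.
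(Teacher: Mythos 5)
Your proposal is correct and takes essentially the same route as the paper's proof: structural induction over the clique tree with exactly the same invariant on the \texttt{value} arrays, the same disjointness of the sets $R(T_j)$ and $F_{T_j}$ across sibling subtrees, and the same handling of the root clique and of the subfunctions with no differing variables. Your explicit separation lemma is merely a spelled-out form of the paper's inline appeal to the running-intersection property (that subfunctions in $F_{T_j}$ do not depend on variables in $C_i - S_j$), so the two arguments coincide in substance.
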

\begin{proof}
We will prove the theorem by structural induction over the clique tree. 
We will denote with $T_i$ the subtree of $T$ with $C_i$ in the root. We also introduce $C(T_i)$ as the union of the $C_j$ sets for all the cliques $C_j \in T_i$ and will use the convenient notation $R(T_i)=C(T_i)-S_i$. Observe that $C(T)=R(T)=d(x,y)$. Regarding the subfunctions, we introduce the notation $F_{T_i}$ to refer to the set of subfunctions associated with a clique in $T_i$: $F_{T_i} = \cup_{C_j \in T_i} F_j$. We only need to consider the subfunctions in $F_T$, because the remaining ones are constant in the dynastic potential of $x$ and $y$. Thus, Eq.~\eqref{eqn:dynp} is equivalent to:
\begin{equation}
\label{eqn:dynp-proof}
f(z)=\max_{w \in \Bo^n \wedge 1_{d(x,y)}} \sum_{f \in F_T} f(x \oplus w).
\end{equation}

The claim to be proven is that for each clique $C_i$, after its evaluation using Lines~\ref{lin:si-iteration} to~\ref{lin:si-iteration-end} of Algorithm~\ref{alg:dynp}, the array \texttt{value}[$i$] holds the following equation:
\begin{equation}
\label{eqn:induction}
\text{value}[i][w]=\max_{v \in \Bo^n \wedge 1_{R(T_i)}} \sum_{f \in F_{T_i}} f(x \oplus w \oplus v) \;\; \forall w \in \Bo^n \wedge 1_{S_i}. 
\end{equation}
Eq.~\eqref{eqn:induction} reduces to Eq.~\eqref{eqn:dynp-proof} when the clique $C_i$ is the root of $T$ and $T_i=T$. Thus, we only need to prove Eq.~\eqref{eqn:induction} using structural induction.
Let us start with the base case: a leaf clique. In this case, $R(T_i)=R_i$ and $C(T_i)=C_i$ and there is no child clique to iterate over in the for loop of Lines~\ref{lin:children-evaluation-start} to~\ref{lin:children-evaluation-end}. Eq.~\eqref{eqn:induction} is reduced to:
\begin{equation}
\label{eqn:base}
\text{value}[i][w]=\max_{v \in \Bo^n \wedge 1_{R_i}} \sum_{f \in F_{i}} f(x \oplus w \oplus v) \;\; \forall w \in \Bo^n \wedge 1_{S_i},
\end{equation}
and Lines~\ref{lin:children-evaluation-start} to~\ref{lin:children-evaluation-end} fill the \texttt{value}[$i$] array using exactly the expression in Eq.~\eqref{eqn:base}.

Now, we use the induction hypothesis to prove that Eq.~\eqref{eqn:induction} holds for any other node in the tree. In this case we have $C(T_i)=C_i \cup \bigcup_{j \in ch(i)} C(T_j)$ and $R(T_i)=R_i \cup \bigcup_{j \in ch(i)} R(T_j)$. The values computed by Lines~\ref{lin:children-evaluation-start} to~\ref{lin:children-evaluation-end} and stored in the \texttt{value}[$i$] array for all $w \in \Bo^n \wedge 1_{S_i}$ are:
\begin{equation}
\label{eqn:ind1}
\text{value}[i][w] = \max_{v \in \Bo^n \wedge 1_{R_i}} \left( \sum_{f \in F_{i}} f(x \oplus w \oplus v) +  \sum_{j \in ch(i)} \text{value}[j][(w \oplus v) \wedge 1_{S_j}] \right),
\end{equation}
and using the induction hypothesis we can replace \texttt{value}[$j$][$(w \oplus v) \wedge 1_{S_j}$] by the right hand side of Eq.~\eqref{eqn:induction} to write:
\begin{align}
\nonumber
\text{value}[i][w] = \max_{v \in \Bo^n \wedge 1_{R_i}} \left( \sum_{f \in F_{i}} f(x \oplus w \oplus v) +  \sum_{j \in ch(i)} \max_{v' \in \Bo^n \wedge 1_{R(T_j)}} \sum_{f \in F_{T_j}} f(x \oplus w \oplus v \oplus v') \right),
\end{align}
where we replaced $(w \oplus v) \wedge 1_{S_j}$ by $w \oplus v$ in the inner sum because the subfunctions in $F_{T_j}$ do not depend on any variable in $C_i-S_j$, which are the ones that differ in both expressions. The sets $F_{T_j}$ are disjoint for all $j \in ch(i)$, as well as the sets $R(T_j)$. Thus, we can swap the maximum and the sum to write:
\begin{align}
\nonumber
\text{value}[i][w] = \max_{v \in \Bo^n \wedge 1_{R_i}} \left( \sum_{f \in F_{i}} f(x \oplus w \oplus v) +  \max_{v' \in \Bo^n \wedge 1_{R(T_i)-R_i}} \sum_{f \in F_{T_i}-F_i} f(x \oplus w \oplus v \oplus v') \right),
\end{align}
where we used the identities $\bigcup_{j \in ch(i)} R(T_j) = R(T_i)-R_i$ and $\bigcup_{j \in ch(i)} F_{T_j} = F_{T_i} -F_i$ to simplify the expression. Finally, we can introduce the first sum in the maximum and notice that $v'$ is zero in the variables of $R_i$ to write:
\begin{align}
\nonumber
\text{value}[i][w] &= \max_{v \in \Bo^n \wedge 1_{R_i}} \max_{v' \in \Bo^n \wedge 1_{R(T_i)-R_i}} \sum_{f \in F_{T_i}} f(x \oplus w \oplus v \oplus v'),
\end{align}
which is Eq.~\eqref{eqn:induction} written in a different way.
\end{proof}

The operator described is an optimal recombination operator: it finds a best offspring from the largest dynastic potential.
The time required to evaluate one clique in Algorithm~\ref{alg:dynp} is $O((|F_{i}|+|ch(i)|) 2^{|C_i|})$. The number of children is bounded by $n$ and the number of subfunctions $m$ is bounded by $O(n^k)$ due to the $k$-bounded epistasis of $f$. However, the exponential factor is a threat to the efficiency of the algorithm. In the worst case $C_i$ can contain all the variables and the factor would be $2^n$. 

\subsection{Limiting the Complexity}
\label{subsec:complexity}

In order to avoid the exponential runtime of DPX, we propose to limit the exploration in Lines~\ref{lin:si-iteration} and~\ref{lin:ri-iteration} of Algorithm~\ref{alg:dynp}. Instead of iterating over all the possible combinations for all the variables in the separators $S_i$ and the residues $R_i$, we fix a bound $\beta$ on the number of variables that will be exhaustively explored. The remaining variables will jointly take only two values, each one coming from one of the parents. In a separator $S_i$ or residue $R_i$ with more than $\beta$ variables, we still have to decide which variables are exhaustively explored and which ones will be taken from one parent. One important constraint in this decision is that once we decide that two variables $x_i$ and $x_j$ will be taken from the same parent, then this should also happen in the other cliques where the two variables appear. We use a disjoint-set data structure~\citep{Tarjan1975} to keep track of the variables that must be taken from the same parent. In each clique, the variables that are articulation points in the VIG are added first to the list of variable to be fully explored. The motivation for this can be found in Section~\ref{subsec:theo-comparison}. The other variables are added in an arbitrary order (implementation dependent) to the list of variables to fully explore. We defer to future work a more in depth analysis on the strategies to decide which variables are fully explored in each clique.

Let us illustrate this with our example of Figures~\ref{fig:recom2} and~\ref{fig:mcs}, where we set $\beta=2$. This does not affect to the evaluation of $C_2$ or $C_3$, because in both cases the sets $S_i$ and $R_i$ have cardinality less than or equal to $\beta=2$, so all the variables in $R_2$, $S_2$, $R_3$ and $S_3$ will be fully explored. However, once cliques $C_3$ and $C_2$ (in that order) have been evaluated, we need to evaluate $C_1$ and, in this case, $|R_1|=4 > 2 = \beta$. For the evaluation of $C_1$, two variables, say $x_7$ and $x_{12}$, are fully enumerated (the four combinations of values for them are considered) and the other two variables, $x_{13}$ and $x_{15}$, are taken from the same parent and only two combinations are considered for them: 00 (red parent) and 11 (blue parent). In total, only $2^3=8$ combinations of values for the variables in this clique are explored, instead of the $2^4=16$ possible combinations.

This reduces the exponential part of the complexity of Algorithm~\ref{alg:dynp} to $2^{2(\beta+1)}$. Since $\beta$ is a predefined constant parameter decided by the user, the exponential factor turns into a constant. The operator is not anymore an optimal recombination operator. In the cases where $\beta+1 \geq \max \{|R_i|,|S_i|\}$ for all the cliques, DPX will still return the optimal offspring. 

\begin{theorem}
Given a function in the form of Equation~\eqref{eqn:fn} with $m$ subfunctions, the complexity of DPX with a constant bound $\beta$ for the number of exhaustively explored variables is $O(4^{\beta} (n+m)+n^2)$.
\end{theorem}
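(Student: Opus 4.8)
The plan is to bound the running time of each of the seven lines of Algorithm~\ref{alg:dpx} separately and then add the contributions. Lines~\ref{lin:recombination-graph} through~\ref{lin:clique-tree} are pure graph operations whose costs are already quoted in the excerpt: building the recombination graph as in PX costs $O(n^2+m)$, while maximum cardinality search, the fill-in procedure and the clique-tree construction of~\citet{Galinier1995} each run in $O(n+e')$ time, which is $O(n^2)$ in the worst case. None of these steps depends on $\beta$, so together they account for the $O(n^2+m)$ part of the claimed bound. I would also argue that the subfunction assignment of Line~\ref{lin:subfunctions-assignment} and the reconstruction of Line~\ref{lin:reconstruction} fit within this budget: each of the $m$ subfunctions touches at most $k=O(1)$ differing variables, so after an $O(n)$ preprocessing that records, for every variable, the unique clique in whose residue it lies, each subfunction can be placed in $O(1)$ time, giving $O(n+m)$ overall; the reconstruction simply walks the $O(n)$ cliques once, reading the stored \texttt{variable} entries, in $O(n)$ time.

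The heart of the proof is Line~\ref{lin:dynp}, the dynamic programming of Algorithm~\ref{alg:dynp}. Here I would use the effect of the bound $\beta$ described just before the theorem: in any clique, at most $\beta$ variables of the separator $S_i$ are enumerated exhaustively while the remainder are collapsed into a single binary (parent-choice) degree of freedom, so the loop of Line~\ref{lin:si-iteration} runs over at most $2^{\beta}\cdot 2 = 2^{\beta+1}$ strings $w$; the same reasoning applied to the residue $R_i$ bounds the loop of Line~\ref{lin:ri-iteration} by $2^{\beta+1}$ strings $v$. Inside these two nested loops the body costs $O(|F_i|+|ch(i)|)$, since it performs one evaluation per assigned subfunction and one table lookup per child clique. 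Hence the total work for clique $C_i$ is $O\bigl((|F_i|+|ch(i)|)\,2^{2(\beta+1)}\bigr)=O\bigl((|F_i|+|ch(i)|)\,4^{\beta}\bigr)$.

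It then remains to sum this over all cliques, and two observations from the excerpt make the sum telescope cleanly. First, a chordal graph has at most $n$ maximal cliques, so the clique tree has $O(n)$ nodes and therefore $O(n)$ edges, giving $\sum_i |ch(i)| = O(n)$. Second, each subfunction whose differing-variable set is nonempty is assigned to exactly one clique, so $\sum_i |F_i| \le m$. Consequently $\sum_i O\bigl((|F_i|+|ch(i)|)\,4^{\beta}\bigr) = O\bigl(4^{\beta}(m+n)\bigr)$, which is the $\beta$-dependent part of the bound. Adding the $O(n^2+m)$ cost of the remaining lines yields the stated $O(4^{\beta}(n+m)+n^2)$.

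The main obstacle I anticipate is not the summation but the careful justification of the per-clique factor $2^{2(\beta+1)}$: one has to verify that the disjoint-set bookkeeping that forces jointly-chosen variables to stay together across cliques indeed leaves at most $\beta+1$ independent binary choices in each of $S_i$ and $R_i$, and that collapsing variables does not smuggle extra evaluations into the loop bodies. A secondary point requiring care is confirming that the assignment and reconstruction really stay within $O(n+m)$, since a naive implementation that searches the clique tree afresh for each subfunction could cost $O(mn)$ and would not fit the claimed bound.
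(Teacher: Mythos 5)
Your proposal is correct and follows essentially the same route as the paper's own proof: the same per-clique cost $O\bigl((|F_i|+|ch(i)|)\,2^{2(\beta+1)}\bigr)$ for the dynamic programming, the same summation argument using the facts that a chordal graph has $O(n)$ cliques (hence $\sum_i |ch(i)| = O(n)$) and that each subfunction is assigned to exactly one clique (hence $\sum_i |F_i| \le m$), together with the $O(n^2)$ accounting for chordalization and clique-tree construction, the $O(n+m)$ subfunction assignment, and the $O(n)$ reconstruction. The paper states the assignment and loop-bound justifications more tersely (using the MCS ordering and the residue property), but your elaborations fill in exactly those details rather than changing the argument.
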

\begin{proof}
We have seen in Section~\ref{subsec:chordal} that the complexity of maximum cardinality search, the fill-in procedure and the clique tree construction is $O(n^2)$. The assignment of subfunctions to cliques can be done in $O(n+m)$ time, using the variable ordering found by MCS to assign the subfunctions that depends on each visited variable to the only clique where the variable is a residue. The complexity of the dynamic programming computation is:
\begin{align*}
O\left( \sum_{i} \left(|F_{i}|+|{ch}(i)|\right) 2^{2(\beta+1)}  \right) &= O\left( 4^{\beta} \left(m + \sum_{i} |{ch}(i)|\right)  \right) = O(4^{\beta}(m + n)), \\
\end{align*}
where we used the fact that the sum of the cardinality of the children for all the cliques is the number of edges in the clique tree, which is the number of cliques minus one, and the number of cliques is $O(n)$.
The reconstruction of the offspring solution requires to read all the \texttt{variable} arrays until building the solution. The complexity of this procedure is $O(n)$.
\end{proof}

In many cases, the number of subfunctions $m$ is $O(n)$ or $O(n^2)$. In these cases, the complexity of DPX reduces to $O(4^{\beta}n^2)$. But complexity can even reduce to $O(n)$ in some cases. In particular, when all the connected components of the recombination graph are paths or have a number of variables bounded by a constant, the number of edges in the original and the chordal graph is $O(n)$ and the complexity of DPX inherits this linear behaviour. 
This is the case for the recombination graph showed in Figure~\ref{fig:maxsat-recomb} for a real SAT instance, so this linear time complexity is not unusual, even in real and hard instances.

%

\subsection{Theoretical comparison with PX and APX}
\label{subsec:theo-comparison}

DPX is not worse than PX, since, in the worst case, it will pick the variables for each connected component of the recombination graph from one of the parent solutions (what PX does). In other words, if $\beta=0$ and there is only one clique in all the connected components of the recombination graph (worst case), DPX and PX behave the same and produce offspring with the same quality.
We wonder, however, if this happens with APX. If $\beta+1 \geq \max \{|R_i|,|S_i|\}$ for all the cliques $C_i$ in the chordal graph derived from the recombination graph, DPX cannot be worse than any recombination operator with the property of gene transmission and, in particular, it cannot be worse than APX. Otherwise, if the limit in the exploration explained in Section~\ref{subsec:complexity} is applied, it could happen that articulation points are not explored as they are in APX. One possible threat to the articulation points exploration in DPX is that they disappear after making the graph chordal. 
Fortunately, to make a graph chordal, the fill-in procedure only adds edges joining vertices in a cycle and, thus, it keeps the articulation points.
We provide formal proofs in the following. 

\begin{lemma}
\label{lem:fillin}
The fill-in procedure adds edges joining vertices in a cycle of the original graph.
\end{lemma}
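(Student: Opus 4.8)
The plan is to understand precisely what edges the fill-in procedure introduces, and then argue that any such edge connects two vertices that already lay on a common cycle of the original recombination graph. Recall from Section~\ref{subsec:chordal} that the maximum cardinality search (MCS) produces a numbering $\gamma$, and the fill-in procedure inspects, for every vertex $u$, its higher-numbered neighbors: whenever two vertices $v$ and $w$ satisfy $(u,v),(u,w)\in E$ and $\gamma(u)<\min\{\gamma(v),\gamma(w)\}$, the edge $(v,w)$ is added if not already present. So the first step is to fix attention on a single such triple $(u,v,w)$ and exhibit a cycle in the \emph{original} graph through $v$ and $w$. The immediate cycle candidate is the triangle-to-be $v - u - w$, which already gives the path $v-u-w$ using original edges; what remains is to find a second, internally disjoint $v$--$w$ connection in the original graph so that together they close a cycle.

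\textbf{Main approach.} I would proceed by induction on the order in which edges are added by the fill-in procedure, processing vertices $u$ in increasing order of $\gamma(u)$ (the order in which fill-in eliminates them). The key observation is that the fill-in edge $(v,w)$ is added precisely because $u$ is a common neighbor of $v$ and $w$ with the smallest label, i.e., $u$ is being eliminated while both $v$ and $w$ are still present. In the elimination-game view, adding $(v,w)$ makes the higher-numbered neighbors of $u$ pairwise adjacent. The induction hypothesis is that every edge present at the moment $u$ is processed (whether original or previously filled) already joins two vertices lying on a common cycle of the original graph. Under this hypothesis, both edges $(u,v)$ and $(u,w)$ join vertices on cycles of the original graph; I then need to splice these together to produce a cycle through $v$ and $w$ that avoids using the not-yet-existent edge $(v,w)$.

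\textbf{Key steps.} First, establish the base case: before any fill-in edge is added, every edge is an original edge, and the statement to prove is only about \emph{added} edges, so the first added edge $(v,w)$ has $(u,v),(u,w)\in E$ both original, giving the path $v-u-w$ of original edges. For the cycle I must find a second original $v$--$w$ path disjoint from $u$; here I would use that $v,w$ are both higher-numbered neighbors of $u$ and invoke connectivity within the relevant connected component together with the elimination structure to produce such a path (each vertex eliminated after $u$ keeps its surviving neighbors mutually reachable). Second, in the inductive step, replace any filled edge appearing on the constructed paths by the original-graph cycle guaranteed by the induction hypothesis, stitching these cycles together to recover an original-graph cycle through $v$ and $w$; standard care must be taken to keep the spliced walk simple (a closed walk containing $v$ and $w$ contains a cycle through them). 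Since the added edge $(v,w)$ always sits inside such a cycle, its endpoints are two non-adjacent-in-the-cycle vertices joined by a chord, which is exactly what the statement asserts.

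\textbf{Expected obstacle.} The delicate part is the inductive splicing: when the two $v$--$w$ paths pass through previously added fill-in edges, substituting each by its guaranteeing original cycle can create overlaps or revisited vertices, so turning the resulting closed walk into a genuine simple cycle through both $v$ and $w$ requires a careful disjointness or minimality argument rather than mere concatenation. I would handle this by choosing minimal-length paths and appealing to the fact that any closed walk visiting two vertices $v$ and $w$ contains a simple cycle through both, reducing the bookkeeping. A cleaner alternative, which I would pursue if the direct splicing gets unwieldy, is to phrase the claim in terms of the elimination game and known chordal-completion theory: a fill-in edge $(v,w)$ exists iff there is a path from $v$ to $w$ in the original graph all of whose internal vertices have labels smaller than $\min\{\gamma(v),\gamma(w)\}$ (the classical characterization of fill-in). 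Combining such a monotone path with the edge $v-u-w$ (noting $u$ is an internal low-labeled vertex) immediately yields the desired cycle, making the reuse of this standard characterization the smoothest route to the result.
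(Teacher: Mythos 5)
Your proposal has a genuine gap, and the root cause is that neither of your two routes uses the one property that actually makes the lemma true: the specific structure of the \emph{maximum cardinality search} ordering. The lemma is false for fill-in run on an arbitrary elimination ordering. Take two triangles $\{v,a,u\}$ and $\{w,b,u\}$ sharing the single vertex $u$, and eliminate $u$ first: fill-in then adds the edge $(v,w)$, yet $v$ and $w$ lie on no common cycle of the original graph, since the articulation point $u$ separates them. Your fallback argument relies only on the classical characterization of fill edges (existence of a $v$--$w$ path whose internal vertices have labels below $\min\{\gamma(v),\gamma(w)\}$), and that characterization holds verbatim in this counterexample; hence no argument built solely on it can prove the lemma. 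Concretely, the flaw is that the monotone path and the path $v$--$u$--$w$ are both ``low'' paths ($u$ itself has a small label), so nothing prevents them from overlapping --- indeed $v$--$u$--$w$ may be the \emph{only} monotone path --- and no cycle is obtained. Your primary (inductive) route fails for a related reason: the induction hypothesis ``every present edge joins two vertices on a common cycle of the original graph'' is too weak, because the relation ``lies on a common cycle'' is not transitive on vertices (in the two-triangle graph, $(u,v)$ and $(u,w)$ each lie on cycles, but $v$ and $w$ share none). Moreover, your splicing step leans on the claim that any closed walk visiting $v$ and $w$ contains a simple cycle through both, which is false: the walk $v,a,w,a,v$ in the path graph on $\{v,a,w\}$ visits both but the graph has no cycle at all.

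What is missing, and what the paper's proof supplies, is a second $v$--$w$ path that is \emph{forced} to be internally disjoint from the low path. The paper assumes without loss of generality $\gamma(v)<\gamma(w)$ and combines two facts: (i) the definition of fill-in gives a $v$--$w$ path in the original graph with all internal labels strictly \emph{below} $\gamma(v)$; (ii) during MCS the set of already-numbered vertices is connected at every stage, so at the moment $v$ was numbered there is a $v$--$w$ path with all internal labels strictly \emph{above} $\gamma(v)$. These two paths are internally disjoint by construction (their internal labels lie on opposite sides of $\gamma(v)$), and together they form a cycle of the original graph through $v$ and $w$. Your argument would be repaired by replacing your second low path with this high path obtained from the MCS prefix-connectivity property; without some appeal of that kind to MCS, no version of your argument can succeed.
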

\begin{proof}
Let us assume that edge $(v,w)$ is added to the graph by the fill-in procedure.
The values $\gamma(v)$ and $\gamma(w)$ are the numbers assigned by maximum cardinality search to nodes $v$ and $w$. Let us assume without loss of generality that $\gamma(v) < \gamma(w)$. 
The definition of fill-in \citep{Tarjan:Yannakis1984} implies that there is a path between $v$ and $w$ where all the intermediate nodes have a $\gamma$ value lower than $\gamma(v)$. 
On the other hand, during the application of maximum cardinality search the set of numbered nodes form a connected component of the graph. 
This implies that at the moment in which $v$ was numbered there existed a path between $v$ and $w$ with $\gamma$ values higher than $\gamma(v)$. As a consequence, two non-overlapping paths exist between $v$ and $w$ in the original graph and they form a cycle.
\end{proof}

\begin{theorem}
\label{thm:ap-fillin}
Articulation points of a graph are kept after the fill-in procedure.
\end{theorem}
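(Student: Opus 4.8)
The plan is to leverage Lemma~\ref{lem:fillin}, which guarantees that every edge introduced by the fill-in procedure joins two vertices lying on a common cycle of the original graph. Let $G$ denote the original (recombination) graph and $G'$ the chordal graph obtained after fill-in; since fill-in only adds edges, $G'$ has the same vertex set and $E(G) \subseteq E(G')$. I would fix an arbitrary articulation point $a$ of $G$. By definition, $G - a$ has at least two connected components, so I would pick two vertices $u$ and $u'$ lying in distinct components of $G - a$, meaning every $u$--$u'$ path in $G$ passes through $a$. The goal is to show that $u$ and $u'$ remain separated by $a$ in $G'$, i.e. that $a$ is still an articulation point of $G'$.

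The central step is to argue that no edge added by fill-in can reconnect two different components of $G - a$. I would consider any added edge $(v,w)$. When $a$ is deleted from $G'$, every edge incident to $a$ vanishes, so the only edges that can affect connectivity in $G' - a$ are those with $v \neq a$ and $w \neq a$; I would treat only these. By Lemma~\ref{lem:fillin}, $v$ and $w$ lie on a common cycle $Z$ of $G$. If $a \notin Z$, the whole cycle survives in $G - a$, so $v$ and $w$ are already connected there. If $a \in Z$, then deleting $a$ turns the cycle $Z$ into a path that still contains all of its remaining vertices, and since $v, w \neq a$ both lie on this path, they remain connected in $G - a$. Hence, in every case, the endpoints of each added edge lie in the same connected component of $G - a$.

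From here the conclusion is immediate: $G' - a$ is obtained from $G - a$ by adding edges whose endpoints already belong to the same component, and adding an intra-component edge never merges two components nor decreases their number. Therefore $G' - a$ has exactly the same connected components as $G - a$, so $u$ and $u'$ stay in distinct components and $a$ is an articulation point of $G'$. Since $a$ was an arbitrary articulation point, all articulation points of $G$ are preserved under fill-in.

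I expect the only delicate point to be the case analysis on whether $a$ lies on the cycle $Z$, together with the observation that deleting a single vertex from a cycle yields a path still connecting the two (non-deleted) endpoints $v$ and $w$; once this is settled, the component-counting argument is routine. A minor technical care is also needed to dismiss the edges incident to $a$, which disappear together with $a$ and thus cannot influence connectivity in $G' - a$.
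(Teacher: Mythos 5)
Your proof is correct and rests on the same key ingredient as the paper's, namely Lemma~\ref{lem:fillin}, but it exploits that lemma differently. The paper argues at the level of bi-connected components: it observes that an edge joining two vertices of a common cycle lies inside a single bi-connected component, and then invokes, without proof, the fact that adding edges inside a bi-connected component never destroys articulation points. You instead prove the preservation from first principles: fixing an articulation point $a$, you show by a case analysis (the cycle $Z$ avoids $a$, or $Z$ contains $a$ and degenerates to a path in $G-a$) that every fill-in edge has both endpoints in one component of $G-a$, so the component partition of $G-a$ is unchanged when the fill-in edges are added. Your route is more elementary and self-contained --- it effectively supplies the proof of the bi-connected-component fact that the paper only cites --- at the cost of being longer. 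One small point to tighten: to conclude that $a$ is an articulation point of $G'$ you need the number of components of $G'-a$ to exceed that of $G'$, and your witnesses $u,u'$ only certify this if they are connected in $G'$. This is easily repaired, either by choosing $u,u'$ to be neighbours of $a$ inside the component of $G$ that $a$ splits (so they are joined through $a$ in $G'$), or by pure counting: your main claim gives that $G'-a$ and $G-a$ have the same number of components, which exceeds the number of components of $G$, and hence that of $G'$, since adding edges never increases the number of components.
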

\begin{proof}
According to Lemma~\ref{lem:fillin} all the edges added by the fill-in procedure join vertices in a cycle of the original graph. This means that the edges are added to bi-connected components of the graph, and never join vertices in two different bi-connected components. Adding edges to a bi-connected component never removes articulation points and the result follows.
\end{proof}

The previous theorem implies that articulation points of the original recombination graph are also articulation points of the chordal graph. Articulation points of a chordal graph are minimal separators of cardinality one \citep{Galinier1995} and they will appear in the sets $S_i$ of some cliques $C_i$. They are, thus, identified during the clique tree construction. 
This inspires a mechanism to reduce the probability that a solution explored in APX is not explored in DPX. 
In each clique $C_i$ when $\beta$ variables are chosen to be exhaustively explored (Lines~\ref{lin:si-iteration} and~\ref{lin:ri-iteration} of Algorithm~\ref{alg:dynp}) we choose the articulation points first. This way, articulation points can be exhaustively explored with higher probability. 
The only thing that can prevent articulation points from being explored is that many of them appear in one single clique. This situation is illustrated in Figure~\ref{fig:pathological-dpx}. For $\beta=0$ all the cliques are evaluated only in the two parent solutions as PX does, while APX explores 20 different combinations of variables according to Eq. (6) of \cite{Chicano2018gecco}.
For $\beta = 1$, the cliques $C_2$, $C_3$ and $C_4$ are fully explored, but 
in the clique of articulation points, $C_1$, only variable $x_4$ is fully enumerated, variables $x_5$ and $x_6$ are taken from the same parent. The total number of solutions explored is 32, which is more than the ones analyzed by APX (20), but the articulation points $x_5$ and $x_6$ are not explored in the same way as in APX and the set of solutions explored by DPX and APX differ. Thus, APX could find an offspring with higher fitness than the one obtained by DPX with $\beta=1$.

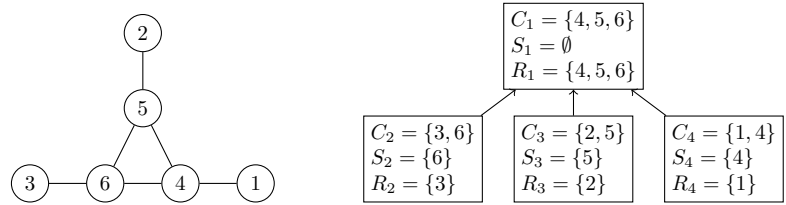
\begin{figure}[!ht]
\centering
\tikzstyle{estilo}=[circle,draw=black]
\tikzstyle{numbers}=[text=red]
\tikzstyle{clique}=[rectangle,draw=black,align=left]
%
\begin{tikzpicture}[scale=0.50, every node/.style={scale=0.8}]
\node[estilo] (x3) at (0,0) {$3$};
\node[estilo] (x6) at (2,0) {$6$};
\node[estilo] (x4) at (4,0) {$4$};
\node[estilo] (x1) at (6,0) {$1$};
\node[estilo] (x5) at (3,2) {$5$};
\node[estilo] (x2) at (3,4) {$2$};
\draw (x3) -- (x6);
\draw (x6) -- (x4);
\draw (x4) -- (x1);
\draw (x6) -- (x5);
\draw (x4) -- (x5);
\draw (x2) -- (x5);
\end{tikzpicture} \hspace{1cm}
\begin{tikzpicture}[scale=0.50, every node/.style={scale=0.8}]
\node[clique] (c1) at (4,3) {$C_1= \{4,5,6\}$\\$S_1=\emptyset$\\$R_1=\{4,5,6\}$};
\node[clique] (c2) at (0,0) {$C_2= \{3,6\}$\\$S_2=\{6\}$\\$R_2=\{3\}$};
\node[clique] (c3) at (4,0) {$C_3= \{2,5\}$\\$S_3=\{5\}$\\$R_3=\{2\}$};
\node[clique] (c4) at (8,0) {$C_4= \{1,4\}$\\$S_4=\{4\}$\\$R_4=\{1\}$};
\draw [->] (c2) -- (c1);
\draw [->] (c3) -- (c1);
\draw [->] (c4) -- (c1);
\end{tikzpicture}

\caption{Connected component in a recombination graph (left) and its clique tree (right). DPX with \mbox{$\beta = 1$} explores the articulation points in a different way as APX.}
\label{fig:pathological-dpx}
\end{figure}

\subsection{Generalization of DPX}
Although this paper focuses on Mk Landscapes, defined over binary variables, DPX can also be applied as is when the variables take their values in a finite set different from the binary set. In this case, one can imagine that 0 represents the value of a differing variable in the red parent, and 1 the value of the same variable in the blue parent. All the results, including runtime guarantees, are the same. The only difference is that the offspring of DPX is not optimal, in general, in the smallest hyperplane containing the parent solutions, because an optimal solution could have values not present in the parents. Even in this case, DPX can be modified to keep the same runtime and provide an optimal solution in the mentioned hyperplane, at the only cost of violating the gene transmission property.

\section{Experiments}
\label{sec:experiments}

This section will focus on the experimental evaluation of DPX in comparison with other previous crossover operators. In particular, we want to answer the following two research questions:
\begin{itemize}
\item RQ1: How does DPX perform compared to other crossover operators in terms of runtime and quality of offspring?
\item RQ2: How does DPX perform included in a search algorithm for solving NP-hard pseudo-Boolean optimization problems?
\end{itemize}

Regarding the other operators in the comparison, we include PX and APX because they are gray box crossover operators using the VIG and we want to check the claims exposed in Section~\ref{subsec:theo-comparison} that relate DPX with these two operators.
We also include in this comparison two other operators which dynastic potential has the same size as DPX ($2^{h(x,y)}$): \textit{uniform crossover} and \textit{network crossover}.
In uniform crossover (UX), each variable is taken from one of the two parents with probability 0.5. Network crossover (NX)~\citep{Hauschild2010} uses the learned linkages among the variables to select groups of variables from one parent such that variables from the same connected component in the linkage graph are selected first. In our case, we have complete knowledge of the linkage graph: the variable interaction graph. The VIG is used in our implementation of network crossover and variables are selected using randomized breadth first search in the VIG and starting from a random variable until half of the variables are selected. Then, the group of selected variables is taken from one of the parents and inserted into the other to form the offspring.

Two different kinds of NP-hard problems are used in the experiments: NKQ Landscapes, an academic benchmark which allows us to parameterize the density of edges in the VIG by changing $K$, and MAX-SAT instances from the MAX-SAT Evaluation 2017\footnote{ \url{http://mse17.cs.helsinki.fi/benchmarks.html}.}.
Random NKQ (``Quantized'' NK) landscapes~\citep{Newman1998} can be seen as Mk Landscapes with one subfunction per variable ($m=n$). Each subfunction $f_{\ell}$ depends on variable $x_{\ell}$ and other $K$ random variables, and the codomain of each subfunction is the set $\{0,1,\ldots, Q-1\}$, where $Q$ is a positive integer. Thus, each subfunction depends on exactly $k=K+1$ variables.
The values of the subfunctions are randomly generated, that is, for each subfunction and each combination of variables in the subfunction an integer value in the interval $[0,Q-1]$ is randomly selected following a uniform distribution.
Random NKQ landscapes are NP-hard when $K\geq 2$. The parameter $K$ determines the higher order nonzero Walsh coefficients in its Walsh decomposition, which is a measure of the ``ruggedness'' of the landscape~\citep{Hordijk:1998ej}.
Regarding MAX-SAT, we used the same instances as~\cite{Chicano2018gecco}\footnote{The list of instances is available together with the source code of DPX in Github.} to allow the comparison with APX. They are 160 unweighted and 132 weighted instances. 

The computer used for the experiments is one multicore machine with four Intel Xeon CPU (E5-2680 v3) at 2.5~GHz, summing a total of 48 cores, 192 GB of memory and Ubuntu 16.04 LTS. The source code of all the algorithms and operators used in the experiments can be found in Github\footnote{\url{https://github.com/jfrchicanog/EfficientHillClimbers}} including a link to a docker image to ease the reproduction of the experimental results.

Section~\ref{subsec:crossover-comp} answers RQ1 and Section~\ref{subsec:algorithms-comp} answers RQ2. In Section~\ref{subsec:lon} we include a local optima network analysis of the best overall algorithm identified in Section~\ref{subsec:algorithms-comp} to better understand its behaviour.

\subsection{Crossover comparison}
\label{subsec:crossover-comp}

This section will present the experiments to answer RQ1: how does DPX perform compared to APX, PX, NX and UX in terms of runtime and quality of offspring? In the case of DPX we use values for $\beta$ from 0 to 5. The optimization problem used is random NKQ Landscapes with $n=\numprint{10000}$  variables, $K=2,3,4,5$ and $Q=64$. For each value of $K$ we generated ten different instances, summing a total of 40 NKQ Landscapes instances. In each of them we randomly generated \numprint{6000} pairs of solutions with different Hamming distance between them and applied all the crossover operators. Six different values of Hamming distance $h$ were used, generating \numprint{1000} pairs of random solutions for each Hamming distance. Expressed in terms of the percentage of differing variables, the values for $h$ are 1\%, 2\%, 4\%, 8\%, 16\% and 32\%.
Two metrics were collected in each application of all the crossover operators: runtime and quality improvement over the parents. The crossover runtime was measured with nanoseconds precision (expressed in the tables in an appropriate multiple) and the quality of the offspring is expressed with a relative measure of quality improvement. If $x$ and $y$ are the parent solutions and $z$ is the offspring we define the \emph{quality improvement ratio} (QIR) in a maximization problem as:
\begin{equation}
QIR_f(x,y,z) = \frac{f(z)-\max\{f(x),f(y)\}}{\max\{f(x),f(y)\}},
\end{equation}
that is, the fraction of improvement of the offspring compared to the best parent. All the experiments were run with a memory limit of 5GB of RAM. In the case of PX, APX and DPX we also collected the number of implicitly explored solutions, expressed with its logarithm (the offspring is one best solution in the set of implicitly explored solutions) and the fraction of runs in which the crossover behaves like an optimal recombination (returns the best solution in the largest dynastic potential). Tables~\ref{tab:crossover-comp-runtime-10K} to~\ref{tab:crossover-comp-full-10K} present the runtime, quality of improvement, logarithm of explored solutions and percentage of crossover runs where an optimal offspring is returned. The figures are averages over \numprint{10000} samples (\numprint{1000} crossover operations in each of the ten instances for each value of $K$).

\begin{table}[!ht]
\caption{Average runtime of crossover operators for random NKQ Landscapes with $n=$\numprint{10000} variables. Time is in microseconds ($\mu$s) for UX and in milliseconds (ms) for the rest. The Hamming distance between parents, $h$, is expressed in percentage of variables.}
\label{tab:crossover-comp-runtime-10K}
\centering
\begin{tabular}{@{}rrrrrrrrrrr@{}}
\toprule
 \multicolumn{1}{c}{$h$} & \multicolumn{1}{c}{UX} & \multicolumn{1}{c}{NX} & \multicolumn{1}{c}{PX} & \multicolumn{1}{c}{APX} & \multicolumn{6}{c}{DPX (ms)} \\
\cmidrule{6-11}
 \multicolumn{1}{c}{\%} &  \multicolumn{1}{c}{$\mu$s} & \multicolumn{1}{c}{ms} & \multicolumn{1}{c}{ms} & \multicolumn{1}{c}{ms} & $\beta=0$ & $\beta=1$ & $\beta=2$ & $\beta=3$ & $\beta=4$ & $\beta=5$ \\
\midrule
\multicolumn{11}{c}{$K=2$} \\
1 & 73 & 1.2 & 0.5 & 1.0 & 0.8 & 0.9 & 0.8 & 0.8 & 0.8 & 0.9 \\
2 & 95 & 2.3 & 0.9 & 2.5 & 2.1 & 2.3 & 2.4 & 2.0 & 2.1 & 1.9 \\
4 & 93 & 2.3 & 1.4 & 4.5 & 2.9 & 2.8 & 2.9 & 2.5 & 2.5 & 2.4 \\
8 & 120 & 2.3 & 2.2 & 7.2 & 6.3 & 6.9 & 6.3 & 5.8 & 5.8 & 5.7 \\
16 & 113 & 1.2 & 2.8 & 7.1 & 5.5 & 5.9 & 5.8 & 5.8 & 5.4 & 5.3 \\
32 & 154 & 1.7 & 9.3 & 12.7 & 22.1 & 22.8 & 23.5 & 23.3 & 24.6 & 23.3 \\
\multicolumn{11}{c}{$K=3$} \\
1 & 92 & 1.7 & 0.6 & 1.5 & 1.0 & 1.0 & 1.0 & 1.0 & 0.9 & 1.0 \\
2 & 87 & 2.4 & 1.2 & 3.5 & 1.8 & 1.6 & 2.0 & 1.7 & 1.6 & 1.7 \\
4 & 97 & 2.8 & 1.9 & 6.3 & 3.1 & 3.1 & 3.1 & 2.7 & 2.4 & 2.7 \\
8 & 125 & 2.8 & 3.0 & 8.3 & 4.7 & 4.9 & 5.7 & 5.6 & 4.5 & 4.9 \\
16 & 120 & 1.9 & 5.1 & 9.1 & 9.7 & 10.5 & 10.1 & 10.9 & 11.0 & 11.3 \\
32 & 143 & 2.2 & 5.9 & 16.0 & 251.4 & 257.5 & 256.4 & 273.1 & 267.5 & 263.8 \\
\multicolumn{11}{c}{$K=4$} \\
1 & 79 & 2.8 & 0.9 & 1.9 & 1.1 & 1.1 & 1.3 & 1.4 & 1.0 & 1.2 \\
2 & 96 & 3.8 & 1.7 & 4.4 & 1.7 & 1.9 & 2.1 & 1.8 & 1.5 & 1.6 \\
4 & 93 & 3.4 & 2.2 & 7.1 & 3.2 & 3.3 & 3.5 & 3.6 & 3.2 & 3.5 \\
8 & 99 & 3.5 & 4.8 & 11.6 & 5.6 & 5.7 & 5.4 & 6.0 & 6.8 & 7.0 \\
16 & 116 & 2.7 & 3.7 & 11.2 & 31.7 & 31.9 & 32.8 & 33.3 & 34.0 & 36.2 \\
32 & 143 & 2.8 & 5.2 & 18.0 & 596.7 & 601.9 & 587.8 & 598.9 & 683.0 & 692.4 \\
\multicolumn{11}{c}{$K=5$} \\
1 & 68 & 3.2 & 0.9 & 2.6 & 1.4 & 1.5 & 1.5 & 1.4 & 1.4 & 1.3 \\
2 & 82 & 3.7 & 1.8 & 5.2 & 2.1 & 2.3 & 2.3 & 2.1 & 2.2 & 2.0 \\
4 & 85 & 4.2 & 3.5 & 8.7 & 3.6 & 3.9 & 3.8 & 3.9 & 4.0 & 4.1 \\
8 & 119 & 4.3 & 5.4 & 13.3 & 8.0 & 8.1 & 8.2 & 9.5 & 10.9 & 9.9 \\
16 & 113 & 3.0 & 4.1 & 12.8 & 90.7 & 83.0 & 103.0 & 92.2 & 101.3 & 107.5 \\
32 & 139 & 3.7 & 5.8 & 19.4 & \numprint{1000.5} & \numprint{1034.0} & \numprint{1041.1} & \numprint{1020.3} & \numprint{1089.9} & \numprint{1021.7} \\
\bottomrule					
\end{tabular}
\end{table}

Regarding the runtime, 
we observe some clear trends that we will comment in the following. Uniform crossover is the fastest algorithm (less than 200$\mu$s in all the cases). It randomly selects one parent for each differing bit and this can be done very fast. The other operators are based on the VIG and they require more time to explore it and compute the offspring. Their runtime can be best measured in milliseconds. NX, PX and APX have runtimes between less than one millisecond to 20 ms. DPX is clearly the slowest crossover operator when the parent solutions differ in 32\% of the bits (\numprint{3200} variables), reaching 1 second of computation for instances with $K=5$. For lower values of $h$, APX is sometimes slower than DPX. We also observe an increase in runtime with $h$, which can be explained because the recombination graph to explore is larger. It is interesting to note that no exponential increase is observed in the runtime when $\beta$ increases linearly. To explain this we have to observe Tables~\ref{tab:crossover-comp-logarithm-10K} and~\ref{tab:crossover-comp-full-10K}, where we can see that DPX is able to completely explore the dynastic potential when $h$ is low and the logarithm of the  number of solutions explored increase very slowly with $\beta$ because it is near the maximum possible.
High runtime is one of the drawbacks of DPX, and the other one is memory consumption. For the instances with $n=\numprint{10000}$ variables, 5GB of memory is enough when $K \leq 5$, but we run some experiments with $n=\numprint{100000}$ in which DPX ended with an ``Out of Memory'' error. In these cases, increasing the memory could help, but the amount of memory required is higher than that required by PX and APX, and much higher than the memory required by UX and NX.

\begin{table}[!ht]
\caption{Average quality improvement ratio of crossover operators for random NKQ Landscapes with $n=$\numprint{10000} variables. The numbers are in parts per thousand (\textperthousand). The Hamming distance between parents, $h$, is expressed in percentage of variables.}
\label{tab:crossover-comp-quality-10K}
\centering
\setlength{\tabcolsep}{5.5pt}
\begin{tabular}{@{}rrrrrrrrrrrr@{}}
\toprule
 \multicolumn{1}{c}{$h$} & \multicolumn{1}{c}{UX} & \multicolumn{1}{c}{NX} & \multicolumn{1}{c}{PX} & \multicolumn{1}{c}{APX} & \multicolumn{6}{c}{DPX (\textperthousand)} \\
\cmidrule{6-11}
 \multicolumn{1}{c}{\%} & \multicolumn{1}{c}{\textperthousand} & \multicolumn{1}{c}{\textperthousand} & \multicolumn{1}{c}{\textperthousand} & \multicolumn{1}{c}{\textperthousand} & $\beta=0$ & $\beta=1$ & $\beta=2$ & $\beta=3$ & $\beta=4$ & $\beta=5$ \\
\midrule
\multicolumn{11}{c}{$K=2$} \\
1  & {-0.58} & {-0.55} & {4.92} & {4.93} & {4.92} & {5.04} & {5.04} & {5.04} & {5.04} & {5.04} \\
2  & {-0.79} & {-0.81} & {9.89} & {9.99} & {9.95} & {10.38} & {10.39} & {10.39} & {10.39} & {10.39} \\
4  & {-1.13} & {-1.11} & {19.28} & {19.96} & {19.70} & {21.21} & {21.23} & {21.23} & {21.23} & {21.23} \\
8  & {-1.56} & {-1.54} & {35.04} & {39.19} & {38.15} & {42.80} & {42.92} & {42.92} & {42.92} & {42.92} \\
16 & {-2.08} & {-2.07} & {53.43} & {70.87} & {75.03} & {85.72} & {86.21} & {86.21} & {86.21} & {86.21} \\
32 & {-2.72} & {-2.71} & {34.41} & {42.09} & {108.86} & {123.98} & {134.38} & {137.29} & {138.78} & {139.76} \\
\multicolumn{11}{c}{$K=3$} \\
1  & {-0.64} & {-0.65} & {5.57} & {5.62} & {5.60} & {5.84} & {5.84} & {5.84} & {5.84} & {5.84} \\
2  & {-0.92} & {-0.91} & {10.93} & {11.33} & {11.18} & {12.02} & {12.03} & {12.03} & {12.03} & {12.03} \\
4  & {-1.29} & {-1.26} & {20.10} & {22.50} & {21.95} & {24.50} & {24.57} & {24.57} & {24.57} & {24.57} \\
8  & {-1.72} & {-1.77} & {30.80} & {40.40} & {43.67} & {49.37} & {49.66} & {49.66} & {49.66} & {49.66} \\
16 & {-2.39} & {-2.37} & {21.30} & {25.45} & {63.04} & {70.96} & {77.15} & {79.14} & {80.21} & {80.96} \\
32 & {-2.85} & {-2.85} & {6.55} & {7.38} & {59.15} & {63.68} & {76.87} & {83.34} & {86.36} & {88.23} \\
\multicolumn{11}{c}{$K=4$} \\
1  & {-0.74} & {-0.74} & {6.02} & {6.18} & {6.12} & {6.51} & {6.52} & {6.52} & {6.52} & {6.52} \\
2  & {-1.04} & {-1.04} & {11.47} & {12.48} & {12.20} & {13.46} & {13.48} & {13.48} & {13.48} & {13.48} \\
4  & {-1.42} & {-1.40} & {18.98} & {23.81} & {24.25} & {27.38} & {27.50} & {27.50} & {27.50} & {27.50} \\
8  & {-1.92} & {-1.95} & {17.30} & {21.78} & {41.39} & {46.48} & {49.29} & {50.46} & {51.32} & {52.04} \\
16 & {-2.47} & {-2.55} & {6.92} & {7.93} & {41.63} & {45.38} & {53.90} & {57.24} & {58.91} & {59.98} \\
32 & {-3.15} & {-3.13} & {1.35} & {1.95} & {40.98} & {42.14} & {46.88} & {53.52} & {58.04} & {60.35} \\
\multicolumn{11}{c}{$K=5$} \\
1  & {-0.79} & {-0.78} & {6.38} & {6.72} & {6.61} & {7.18} & {7.18} & {7.18} & {7.18} & {7.18} \\
2  & {-1.10} & {-1.10} & {11.46} & {13.40} & {13.17} & {14.77} & {14.81} & {14.81} & {14.81} & {14.81} \\
4  & {-1.53} & {-1.56} & {15.06} & {20.38} & {26.44} & {29.58} & {30.06} & {30.14} & {30.16} & {30.17} \\
8  & {-2.07} & {-2.06} & {8.07} & {9.56} & {31.18} & {34.54} & {39.26} & {41.02} & {41.98} & {42.67} \\
16 & {-2.68} & {-2.66} & {2.19} & {2.90} & {30.14} & {31.61} & {37.08} & {41.51} & {43.48} & {44.83} \\
32 & {-3.15} & {-3.13} & {0.28} & {0.77} & {32.42} & {32.82} & {34.18} & {36.64} & {40.31} & {44.05} \\
\bottomrule					
\end{tabular}
\end{table}

We can observe that the quality improvement ratio (Table~\ref{tab:crossover-comp-quality-10K}) is always positive in PX, APX and DPX. These three operators, by design, cannot provide a solution that is worse than the best parent. We also observe how the quality improvement ratio is always the highest for DPX. APX and PX are the second and third operators regarding this metric, respectively. The worst operators are UX and NX. They always show a negative quality improvement ratio. We can explain this with the following intuition: the expected fitness of the offspring is similar to the one of a random solution and the probability of improving both parents is 1/4 because the probability of improving each one is 1/2, which means that in most of the cases (3/4 of the cases on average) the offspring will be worse than the best parent. 
We can support this with some theory. Parents $x$ and $y$ are random solutions and their fitness values, $f(x)$ and $f(y)$, are random variables with unknown equal distribution.\footnote{In general, they are not independent because the Hamming distance among them is fixed. But the marginal distributions must be the same.} In UX and NX, the child $z$ is a random solution in the dynastic potential and the expectation of the random variable $f(z)$ should not differ too much from the one of $f(x)$ and $f(y)$ because the dynastic potential is large (at least $2^{100}$ in our experiments) and NKQ Landscapes are composed of functions randomly generated. 
We can use the following equations for the random variables,
\begin{eqnarray}
& & \max\{f(x),f(y)\} + \min\{f(x),f(y)\} = f(x) + f(y), \\
& & \max\{f(x),f(y)\} - \min\{f(x),f(y)\} = \left|f(x) - f(y)\right|,
\end{eqnarray}
where we can take the expectation at both sides and add them to get:
\begin{equation}
2 E[\max\{f(x),f(y)\}] = E[f(x)] + E[f(y)] + E [\left|f(x) - f(y)\right|].
\end{equation}
Finally, we use the fact that $E[f(x)] = E[f(y)]$ and the assumption that $E[f(z)] = E[f(x)]$ to write:
\begin{equation}
E[f(z) - \max\{f(x),f(y)\}] =  - E [\left|f(x) - f(y)\right|] / 2 < 0.
\end{equation}

\begin{table}[!ht]
\caption{Average logarithm in base 2 of the solutions implicitly explored by PX, APX and DPX for random NKQ Landscapes with $n=$\numprint{10000} variables. The Hamming distance between parents, $h$, is expressed in percentage of variables.}
\label{tab:crossover-comp-logarithm-10K}
\centering
\begin{tabular}{@{}rrrrrrrrrr@{}}
\toprule
\multicolumn{1}{c}{$h$}  & \multicolumn{1}{c}{PX} & \multicolumn{1}{c}{APX} & \multicolumn{6}{c}{DPX ($\log_2$)} \\
\cmidrule{4-9}
 \multicolumn{1}{c}{\%} & \multicolumn{1}{c}{$\log_2$} & \multicolumn{1}{c}{$\log_2$} & $\beta=0$ & $\beta=1$ & $\beta=2$ & $\beta=3$ & $\beta=4$ & $\beta=5$ \\
\midrule
\multicolumn{9}{c}{$K=2$} \\
1  & {97.1} & {97.3} & {97.2} & {100.0} & {100.0} & {100.0} & {100.0} & {100.0} \\
2  & {188.1} & {190.3} & {189.3} & {199.9} & {200.0} & {200.0} & {200.0} & {200.0} \\
4  & {352.9} & {368.1} & {362.0} & {399.5} & {400.0} & {400.0} & {400.0} & {400.0} \\
8  & {613.5} & {703.3} & {679.7} & {796.8} & {800.0} & {800.0} & {800.0} & {800.0} \\
16 & {873.6} & \numprint{1220.6} & \numprint{1311.2} & \numprint{1586.5} & \numprint{1600.0} & \numprint{1600.0} & \numprint{1600.0} & \numprint{1600.0} \\
32 & {660.7} & {828.3} & \numprint{2055.6} & \numprint{2399.2} & \numprint{2586.9} & \numprint{2636.5} & \numprint{2661.3} & \numprint{2677.4} \\
\multicolumn{9}{c}{$K=3$} \\
1  & {94.1} & {95.2} & {94.7} & {100.0} & {100.0} & {100.0} & {100.0} & {100.0} \\
2  & {176.5} & {184.3} & {181.2} & {199.8} & {200.0} & {200.0} & {200.0} & {200.0} \\
4  & {306.6} & {352.3} & {341.2} & {398.4} & {400.0} & {400.0} & {400.0} & {400.0} \\
8  & {437.3} & {602.5} & {663.0} & {793.2} & {799.9} & {800.0} & {800.0} & {800.0} \\
16 & {351.5} & {426.4} & \numprint{1019.5} & \numprint{1174.7} & \numprint{1271.0} & \numprint{1300.4} & \numprint{1316.0} & \numprint{1326.9} \\
32 & {141.5} & {155.1} & \numprint{1099.0} & \numprint{1179.1} & \numprint{1395.2} & \numprint{1499.5} & \numprint{1547.6} & \numprint{1576.8} \\
\multicolumn{9}{c}{$K=4$} \\
1  & {90.2} & {93.0} & {91.9} & {99.9} & {100.0} & {100.0} & {100.0} & {100.0} \\
2  & {161.2} & {179.0} & {173.7} & {199.5} & {200.0} & {200.0} & {200.0} & {200.0} \\
4  & {247.0} & {324.7} & {332.8} & {397.4} & {400.0} & {400.0} & {400.0} & {400.0} \\
8  & {238.2} & {305.8} & {580.8} & {674.0} & {713.9} & {729.6} & {740.9} & {750.4} \\
16 & {119.7} & {134.1} & {651.3} & {710.3} & {831.5} & {878.1} & {901.3} & {915.9} \\
32 & {31.1} & {39.5} & {719.8} & {737.4} & {812.0} & {914.8} & {983.3} & \numprint{1018.2} \\
\multicolumn{9}{c}{$K=5$} \\
1  & {85.4} & {91.1} & {89.1} & {99.9} & {100.0} & {100.0} & {100.0} & {100.0} \\
2  & {142.0} & {172.4} & {168.5} & {199.2} & {200.0} & {200.0} & {200.0} & {200.0} \\
4  & {175.4} & {246.1} & {332.2} & {390.6} & {398.2} & {399.4} & {399.8} & {399.9} \\
8  & {113.2} & {132.7} & {420.5} & {470.8} & {530.8} & {552.6} & {564.4} & {572.8} \\
16 & {38.9} & {47.6} & {449.0} & {469.0} & {542.8} & {601.9} & {627.7} & {645.3} \\
32 & {7.5} & {13.7} & {534.0} & {539.3} & {559.3} & {595.7} & {649.7} & {703.5} \\
\bottomrule					
\end{tabular}
\end{table}

In Table~\ref{tab:crossover-comp-logarithm-10K} we can observe how DPX explores more solutions than PX and APX when $\beta >0$. We also observe how APX can outperform DPX in the number of explored solutions when $\beta=0$, as we illustrated with an example in Section~\ref{subsec:theo-comparison}. The latter happens when $h \leq 800$ for $K=2$, when $h\leq 400$ for $K=3$ and when $h\leq 200$ for $K=4$ and $K=5$. DPX always explores more solutions than PX independently of the value of $\beta$, as the theory predicts. As $h$ grows, the dynastic potential increases and also the logarithm of explored solutions for DPX. In fact, in many cases we observe that this logarithm reaches in DPX the maximum possible value, $h$ (the Hamming distance between the parents). This is also reflected in Table~\ref{tab:crossover-comp-full-10K}, where we show the percentage of runs where the full dynastic potential is explored by the operators or, equivalently, the percentage of runs in which the logarithm of explored solutions is $h$. 
In the case of PX and APX the increase in $h$ does not always imply an increase in the number of explored solutions: there is a value of $h$ for which the logarithm of explored solutions reaches a maximum and then decreases with $h$. The number of explored solutions in these two operators is proportional to the number of connected components in the recombination graph. Starting from an empty recombination graph the number of connected components increases as new random variables are added, and this explains why the logarithm of explored solutions in PX and APX increases with $h$ for low values of $h$. At some critical value of $h$, the number of connected components starts to decrease because the new variables in the recombination graph join connected components, instead of generating new ones. The exact value of $h$ at which this happens is approximately $n/(K+1)$ for the adjacent NKQ Landscapes \citep{ChicanoWOT17}. It is difficult to compute this value for the random NKQ Landscapes that we use here, but the critical value must be a decreasing function of $K$. This dependence of the critical value with $K$ can also be observed in Table~\ref{tab:crossover-comp-logarithm-10K}: the value of $h$ at which the number of explored solutions is maximum decreases from $h=\numprint{1600}$ to $h=400$ when $K$ increases from 2 to 5.

Regarding the fraction of runs in which full dynastic potential exploration is achieved, PX and DPX with $\beta=0$ behaves the same and achieve full exploration for some pairs of parents only when $K\leq 3$. APX is slightly better and DPX is the best when $\beta \geq 1$, behaving like an optimal recombination operator in most of the cases for $K=2$ and $h\leq \numprint{1600}$. The fraction of runs with full dynastic potential exploration decreases when $K$ and $h$ increase, due to the higher number of edges in the variable interaction graph, which induces larger cliques.

\begin{table}[!ht]
\caption{Fraction of crossover runs where the full dynastic potential of the parent solutions is explored in PX, APX and DPX for random NKQ Landscapes with $n=$\numprint{10000} variables. The numbers are in percentages (\%). The Hamming distance between parents, $h$, is expressed in percentage of variables.}
\label{tab:crossover-comp-full-10K}
\centering
\begin{tabular}{@{}rrrrrrrrrr@{}}
\toprule
\multicolumn{1}{c}{$h$}  & \multicolumn{1}{c}{PX} & \multicolumn{1}{c}{APX} & \multicolumn{6}{c}{DPX(\%)} \\
\cmidrule{4-9}
 \multicolumn{1}{c}{\%} &  \multicolumn{1}{c}{\%} &  \multicolumn{1}{c}{\%} & $\beta=0$ & $\beta=1$ & $\beta=2$ & $\beta=3$ & $\beta=4$ & $\beta=5$ \\
\midrule
\multicolumn{9}{c}{$K=2$} \\
1  & {5.61} & {6.45} & {5.61} & {99.07} & {100.00} & {100.00} & {100.00} & {100.00} \\
2  & {0.00} & {0.00} & {0.00} & {93.17} & {100.00} & {100.00} & {100.00} & {100.00} \\
4  & {0.00} & {0.00} & {0.00} & {60.73} & {100.00} & {100.00} & {100.00} & {100.00} \\
8  & {0.00} & {0.00} & {0.00} & {4.11} & {99.98} & {100.00} & {100.00} & {100.00} \\
16 & {0.00} & {0.00} & {0.00} & {0.00} & {99.67} & {99.98} & {100.00} & {100.00} \\
32 & {0.00} & {0.00} & {0.00} & {0.00} & {0.00} & {0.00} & {0.00} & {0.00} \\
\multicolumn{9}{c}{$K=3$} \\
1  & {0.27} & {0.47} & {0.27} & {96.46} & {99.99} & {100.00} & {100.00} & {100.00} \\
2  & {0.00} & {0.00} & {0.00} & {77.88} & {99.94} & {100.00} & {100.00} & {100.00} \\
4  & {0.00} & {0.00} & {0.00} & {20.97} & {98.39} & {100.00} & {100.00} & {100.00} \\
8  & {0.00} & {0.00} & {0.00} & {0.22} & {88.87} & {99.93} & {99.99} & {100.00} \\
16 & {0.00} & {0.00} & {0.00} & {0.00} & {0.00} & {0.00} & {0.00} & {0.00} \\
32 & {0.00} & {0.00} & {0.00} & {0.00} & {0.00} & {0.00} & {0.00} & {0.00} \\
\multicolumn{9}{c}{$K=4$} \\
1  & {0.00} & {0.02} & {0.00} & {92.11} & {99.95} & {100.00} & {100.00} & {100.00} \\
2  & {0.00} & {0.00} & {0.00} & {59.18} & {99.54} & {100.00} & {100.00} & {100.00} \\
4  & {0.00} & {0.00} & {0.00} & {8.21} & {95.38} & {99.99} & {100.00} & {100.00} \\
8  & {0.00} & {0.00} & {0.00} & {0.00} & {0.06} & {0.27} & {1.00} & {2.20} \\
16 & {0.00} & {0.00} & {0.00} & {0.00} & {0.00} & {0.00} & {0.00} & {0.00} \\
32 & {0.00} & {0.00} & {0.00} & {0.00} & {0.00} & {0.00} & {0.00} & {0.00} \\
\multicolumn{9}{c}{$K=5$} \\
1  & {0.00} & {0.00} & {0.00} & {86.88} & {99.90} & {99.99} & {100.00} & {100.00} \\
2  & {0.00} & {0.00} & {0.00} & {43.67} & {98.89} & {99.98} & {100.00} & {100.00} \\
4  & {0.00} & {0.00} & {0.00} & {2.96} & {72.71} & {90.86} & {96.30} & {98.58} \\
8  & {0.00} & {0.00} & {0.00} & {0.00} & {0.00} & {0.00} & {0.00} & {0.00} \\
16 & {0.00} & {0.00} & {0.00} & {0.00} & {0.00} & {0.00} & {0.00} & {0.00} \\
32 & {0.00} & {0.00} & {0.00} & {0.00} & {0.00} & {0.00} & {0.00} & {0.00} \\
\bottomrule					
\end{tabular}
\end{table}

\subsection{Crossover in search algorithms}
\label{subsec:algorithms-comp}

In this section we want to answer RQ2: how does DPX perform when it is included in a search algorithm compared to APX, PX, NX and UX? From Section~\ref{subsec:crossover-comp} we conclude that DPX is better than PX, APX, UX and NX in terms of quality of the solution due to its higher exploration capability. This feature is not necessarily good when the operator is included in a search algorithm, because it can produce premature convergence. DPX is also slower, in general, than the other crossover operators in our experimental setting, and this could slow down the search. 
In order to answer the research question we included the crossover operators in two metaheuristic algorithms, each of them from a different family: deterministic recombination and iterated local search (DRILS), a trajectory-based metaheuristic; and an evolutionary algorithm (EA), a population-based metaheuristic. 

DRILS~\citep{ChicanoWOT17} uses a first improving move hill climber to reach a local optimum. The neighborhood in the hill climber is the set of solutions at Hamming distance one from the current solution. Then, it perturbs the solution by randomly flipping $\alpha n$ bits, where $\alpha$ is the so-called \emph{perturbation factor}, the only parameter of DRILS to be tuned. It then applies local search to the new solution to reach another local optimum and applies crossover to the last two local optima, generating a new solution that is improved further with the hill climber. This process is repeated until the stop condition is satisfied. The pseudocode of DRILS can be found in Algorithm~\ref{alg:drils}.

\begin{algorithm}[!t]
\caption{Deterministic recombination and iterated local search (DRILS)}
\label{alg:drils}
\KwData{$f$, stop condition, $\alpha$, crossover operator (and parameters)}
\KwResult{best (found solution)}
$x \leftarrow$ hillClimber(random())\;
best $\leftarrow x$\;
\While{not stopCondition()}{
	$y \leftarrow$ hillClimber (perturb($x$, $\alpha$))\;
	$z$ $\leftarrow$ crossover($x$, $y$)\;
	\label{lin:crossover}
	\eIf{$z = x$ or $z = y$}{
		$x \leftarrow y$\;
	}{
		$x \leftarrow$ hillClimber($z$)\;
		\If{$f(x) > f(\text{best})$}{
			best $\leftarrow x$\;
		}
	}
}
\Return best
\end{algorithm}

The evolutionary algorithm (EA) which we use in our empirical evaluation, is a steady-state genetic algorithm where two parents are selected and recombined to produce an offspring that is mutated. The mutated solution replaces the worst solution in the population only if its fitness is strictly higher. The mutation operator used is bit flip with probability $p_m$ of independently flipping each bit. Three selection operators are considered: binary tournament, rank selection and roulette wheel selection. The pseudocode of the EA is presented in Algorithm~\ref{alg:ea}.

\begin{algorithm}[!t]
\caption{Evolutionary algorithm (EA)}
\label{alg:ea}
\KwData{$f$, stop condition, popsize, selection, crossover and mutation operators (and parameters)}
\KwResult{best (found solution)}
$P \leftarrow$ generateRandomSolutions(popsize)\;
best $\leftarrow$ bestSolution($P$)\;
\While{not stopCondition()}{
	$x \leftarrow$ select($P$)\;
	$y \leftarrow$ select($P$)\;
	$z \leftarrow$ crossover($x$, $y$)\;
	$z \leftarrow$ mutation($z$)\;
	$w \leftarrow$ worstSolution($P$)\;
	\If{$f(z) > f(w)$}{
		$P \leftarrow P/\{w\} \cup \{z\}$\;
		\If{$f(z) > f(\text{best})$}{
			best $\leftarrow z$\;
		}
	}
}
\Return best
\end{algorithm}

We can also consider to apply DPX alone to solve the pseudo-Boolean optimization problems. 
That is, if we use two parent solutions, $x$ and $y$, that are complementary, $y_i=1-x_i$ for all $i=1,\ldots,n$, and we set $\beta=\infty$, DPX will return the global optimum. This approach should be useful in families of Mk Landscapes where the treewidth of the VIG is bounded by a small constant like, for example, the adjacent model of NK Landscapes, which can be solved in linear time~\citep{Wright2000}. In these cases, the cliques $C_i$ found in the clique tree can be fully explored due to the low number of variables they contain. However, we do not expect it to work well for NP-hard problems, like the random model of NKQ Landscapes that we use here, or the general MAX-SAT instances, that we also use in this section. 
The reason is that, in these cases, the cardinality of the cliques, $|C_i|$, usually grows with the number of variables, that is, $|C_i| \in \omega(1)$ and the complexity of DPX (with $\beta=\infty$) can be $2^{\omega(1)}$.
This approach was used by \citet{Ochoa2019gecco} to find global optimal solutions in randomly generated MAX-SAT instances of size $n=40$ variables. It was found that DPX required much time to exhaustively explore the search space and the authors decomposed the search in different parallel non-overlapping searches by fixing the values of some variables in the parent solutions before applying DPX. For the sake of completeness, however, we analyze the execution of DPX alone in Section~\ref{subsec:isodpx}.

For the experiments in this section, we use NKQ Landscapes with $K=2$ and $K=5$ and MAX-SAT instances (unweighted and weighted). With this diverse setting for the experiments (two metaheuristics belonging to different families and two pseudo-Boolean problems with two categories of instances each), we want to test DPX in different scenarios to identify when DPX is useful for pseudo-Boolean optimization and when it is not.

\subsubsection{Parameter settings}
\label{subsec:parameters}

We used an automatic tuning tool, irace~\citep{Lopez-Ibanez2016}, to help us find a good configuration for each algorithm in the different scenarios. The parameters to be tuned are $\beta$ in DPX, $\alpha$ in DRILS, and $p_m$, the selection operator, and the population size in EA. The ranges of values for each of them are presented in Table~\ref{tab:irace-tuned-params}. The version of irace used is 3.4.1 and the budget is \numprint{1000} runs of the tuned algorithm. The rest of parameters of irace were set to the default. We applied irace independently for each combination of algorithm, crossover operator and family of instances used. This way we want to compare the algorithms using a good configuration in each scenario.

\begin{table}[!ht]
\caption{Parameters tuned by irace and their domain.}
\label{tab:irace-tuned-params}
\centering
\begin{tabular}{@{}lrclrclr@{}}
\toprule
\multicolumn{2}{c}{DPX} & & \multicolumn{2}{c}{DRILS} & & \multicolumn{2}{c}{EA} \\
\cmidrule{1-2} \cmidrule{4-5} \cmidrule{7-8}
Parameter & Domain & & Parameter & Domain & & Parameter & Domain \\
\midrule
$\beta$ & [0-5] & & $\alpha$ & [0,0.5] & & $p_m$ & [0,0.5] \\
& & & & & & popsize & [10,100] \\
& & & & & & selection & (tournament, rank, roulette) \\
\bottomrule
\end{tabular}
\end{table}

The stop condition in the search algorithms was set to reach a time limit of 60 seconds (one minute) during the irace tuning and also in the experiments of the following sections. We use runtime as stop condition because the number of fitness function evaluations is not a good metric for the computational budget in our case, where gray box optimization operators are taking profit from the VIG and the subfunction evaluations. This can be easily observed comparing the huge difference in runtime of UX and DPX in Table~\ref{tab:crossover-comp-runtime-10K}.
The concrete value for the time limit (60 seconds) was chosen based on our previous experience with these algorithms including gray box optimization. In previous works and experiments, where we stopped the experiments after 300 seconds, we found that the results are rather stable after 60 seconds, with few exceptions. Using this short time also allows us to perform a larger set of experiments in the same time obtaining more insight on how the algorithms work.

At the end of the tuning phase, irace provides several configurations that are considered equivalent in performance (irace does not find statistically significant differences among them). We took in all the cases the first of those configurations and we show it in Tables~\ref{tab:nkq-irace} and~\ref{tab:maxsat-irace} for each combination of algorithm, crossover operator and set of instances. These are the parameters used in the experiments of Sections~\ref{subsec:nkq} and~\ref{subsec:maxsat}. We did not check the convergence speed of irace and we do not know how far the parameters in Tables~\ref{tab:nkq-irace} and~\ref{tab:maxsat-irace} are from the best possible configurations. Thus, we cannot get any insight from the parameters computed by irace. Our goal is just to compare all the algorithms and operators using good configurations, and avoid bias due to parameter setting.

\begin{table}[!ht]
\caption{Configuration proposed by irace during the tuning phase of the algorithms for NKQ instances with $n=\numprint{10000}$ variables and $K=2, 5$.}
\label{tab:nkq-irace}
\centering
\begin{tabular}{@{}rcrrlrcrrlr@{}}
\toprule
&  & \multicolumn{4}{c}{$K=2$} &  & \multicolumn{4}{c}{$K=5$} \\
\midrule
\multicolumn{1}{l}{DRILS} & & $\beta$ & \multicolumn{1}{c}{$\alpha$} & & & & $\beta$ & \multicolumn{1}{c}{$\alpha$} \\
\cmidrule{3-6} \cmidrule{8-11}
{\scriptsize DPX} & & 1 & 0.2219 &  & &  & 3 & 0.0462 \\
{\scriptsize APX} & &   & 0.1873 &  & &  &   & 0.0231 \\
{\scriptsize PX}  & &   & 0.1240 &  & &  &   & 0.0191 \\
{\scriptsize NX}  & &   & 0.0154 &  & &  &   & 0.0268 \\
{\scriptsize UX}  & &   & 0.0159 &  & &  &   & 0.0238 \\
\midrule
\multicolumn{1}{l}{EA} & & $\beta$ & \multicolumn{1}{c}{$p_m$} & selection & popsize & & $\beta$ & \multicolumn{1}{c}{$p_m$} & selection & popsize \\
\cmidrule{3-6} \cmidrule{8-11}
{\scriptsize DPX} & & 3 & 0.0044 & roulette & 61  & & 2 & 0.0080 & rank     & 15 \\
{\scriptsize APX} & &   & 0.0172 & roulette & 72  & &   & 0.0002 & roulette & 27 \\
{\scriptsize PX}  & &   & 0.0084 & rank     & 47  & &   & 0.0034 & rank     & 70 \\
{\scriptsize NX}  & &   & 0.0007 & roulette & 37  & &   & 0.0008 & rank     & 54 \\
{\scriptsize UX}  & &   & 0.0003 & rank     & 41  & &   & 0.0006 & roulette & 14 \\
\bottomrule
\end{tabular}
\end{table}


\begin{table}[!ht]
\caption{Configuration proposed by irace during the tuning phase of the algorithms for unweighted and weighted MAX-SAT instances.}
\label{tab:maxsat-irace}
\centering
\begin{tabular}{@{}rcrrlrcrrlr@{}}
\toprule
&  & \multicolumn{4}{c}{Unweighted} &  & \multicolumn{4}{c}{Weighted} \\
\midrule
\multicolumn{1}{l}{DRILS} & & $\beta$ & \multicolumn{1}{c}{$\alpha$} & & & & $\beta$ & \multicolumn{1}{c}{$\alpha$} \\
\cmidrule{3-6} \cmidrule{8-11}
{\scriptsize DPX} & & 4 & 0.0582 &  & &  & 2 & 0.1832 \\
{\scriptsize APX} & &   & 0.0941 &  & &  &   & 0.1870 \\
{\scriptsize PX}  & &   & 0.0482 &  & &  &   & 0.0996 \\
{\scriptsize NX}  & &   & 0.0299 &  & &  &   & 0.0241 \\
{\scriptsize UX}  & &   & 0.0571 &  & &  &   & 0.0214 \\
\midrule
\multicolumn{1}{l}{EA} & & $\beta$ & \multicolumn{1}{c}{$p_m$} & selection & popsize & & $\beta$ & \multicolumn{1}{c}{$p_m$} & selection & popsize \\
\cmidrule{3-6} \cmidrule{8-11}
{\scriptsize DPX} & & 5 & 0.0038 & rank       & 18  & & 2 & 0.0018 & rank       & 52 \\
{\scriptsize APX} & &   & 0.0096 & tournament & 19  & &   & 0.0069 & tournament & 24 \\
{\scriptsize PX}  & &   & 0.0051 & tournament & 27  & &   & 0.0086 & rank       & 20 \\
{\scriptsize NX}  & &   & 0.0047 & rank       & 18  & &   & 0.0020 & rank       & 27 \\
{\scriptsize UX}  & &   & 0.0019 & rank       & 18  & &   & 0.0012 & tournament & 78 \\
\bottomrule
\end{tabular}
\end{table}


In order to reduce the bias due to the stochastic nature of the algorithms, they were run ten times for each instance and average results are presented in the next sections. The Mann-Whitney test was run with the samples from the ten independent runs to check if the observed difference in the performance of the algorithms for each instance is statistically significant at the $0.05$ confidence level or not.

\subsubsection{Results for NKQ Landscapes}
\label{subsec:nkq}

In this section we analyze the results obtained for NKQ Landscapes. We do not know the fitness value of the global optimal solutions in the instances. For this reason, for each instance (there are ten instances per value of $K$) we computed the fitness of the best solution found in any run by any algorithm+crossover combination, $f^*$. We used $f^*$ to  normalize the fitness of the best solutions found by the algorithms in the instance. Thus, if $x$ is the best solution found by an algorithm in an instance, we define the \emph{quality} of solution $x$ as $q(x)=f(x)/f^*$. The benefit of this is that the quality of a solution $x$ is a real number between 0 and 1 that measures how near is the fitness of $x$ to the best known fitness (in this set of experiments). This also allows us to aggregate quality values from different instances because they are normalized to the same range. Higher values of quality are better. 

The main results of the section are shown in Table~\ref{tab:nkq-comparison}, where the column \emph{quality} is the average over ten runs and ten instances (100 samples in total) of the quality of the best solution found by the algorithm+crossover combination in the row. The column \emph{statistical difference} shows the result of a Mann-Whitney test (with significance level 0.05) and a median comparison to check if the differences observed in quality between the algorithm+crossover in the row and algorithm+DPX are statistically significant or not.
In each row and for each instance, the results of the ten runs are used as input to the Mann-Whitney test. It determines equivalence or dissimilarity of samples. The sense of the inequality is determined by the median comparison.
In the table, the numbers followed by a black triangle ($\blacktriangle$), white triangle ($\triangledown$) and equal sign ($=$) are the numbers of instances in which the algorithm+crossover combination in the row is statistically better, worse or similar to algorithm+DPX. In Figures~\ref{fig:drils-quality-nk} and~\ref{fig:ea-quality-nk} we show the average quality of the best found solution at any time during the search using the different algorithms and crossover operators. We group the  curves in the figures by algorithm (DRILS and EA) and ruggedness of the instances ($K=2$ and $K=5$).

\begin{table}[!ht]
\caption{Performance of the five recombination operators used in DRILS and EA when solving NKQ Landscapes instances with $n=\numprint{10000}$ variables.
The symbols $\blacktriangle$, $\triangledown$ and $=$ are used to indicate that the use of the crossover operator in the row yields statistically better, worse or similar results than the use of DPX in each algorithm.
}
\label{tab:nkq-comparison}
\centering
\begin{tabular}{@{}rcccrcccr@{}}
\toprule
&  & \multicolumn{3}{c}{$K=2$} &  & \multicolumn{3}{c}{$K=5$} \\
\cmidrule{3-5} \cmidrule{7-9}
& & \multicolumn{1}{c}{Statistical difference} & & Quality & & \multicolumn{1}{c}{Statistical difference}  & & Quality \\
\midrule
\multicolumn{1}{l}{DRILS} \\
{\scriptsize DPX} & &  & & 0.9997 & &  & & 0.9972 \\
{\scriptsize APX} & & $0\blacktriangle\ \phantom{0}8\triangledown\ 2=$ & & 0.9995 & & $\phantom{0}0\blacktriangle\ \phantom{0}7\triangledown\ 3=$ & & 0.9947 \\
{\scriptsize PX} & & $0\blacktriangle\ 10\triangledown\ 0=$ & & 0.9990 & & $\phantom{0}0\blacktriangle\ \phantom{0}7\triangledown\ 3=$ & & 0.9949 \\
{\scriptsize NX} & & $0\blacktriangle\ 10\triangledown\ 0=$ & & 0.9786 & & $\phantom{0}0\blacktriangle\ 10\triangledown\ 0=$ & & 0.9934 \\
{\scriptsize UX} & & $0\blacktriangle\ 10\triangledown\ 0=$ & & 0.9790 & & $\phantom{0}0\blacktriangle\ 10\triangledown\ 0=$ & & 0.9935 \\
\midrule
\multicolumn{1}{l}{EA} \\
{\scriptsize DPX} & &  & & 0.9795 & &  & & 0.8132 \\
{\scriptsize APX} & & $0\blacktriangle\ 10\triangledown\ 0=$ & & 0.9568 & & $\phantom{0}1\blacktriangle\ \phantom{0}0\triangledown\ 9=$ & & 0.8890 \\
{\scriptsize PX} & & $0\blacktriangle\ 10\triangledown\ 0=$ & & 0.9445 & & $10\blacktriangle\ \phantom{0}0\triangledown\ 0=$ & & 0.9085 \\
{\scriptsize NX} & & $0\blacktriangle\ 10\triangledown\ 0=$ & & 0.8803 & & $\phantom{0}0\blacktriangle\ \phantom{0}1\triangledown\ 9=$ & & 0.7811 \\
{\scriptsize UX} & & $0\blacktriangle\ 10\triangledown\ 0=$ & & 0.9313 & & $\phantom{0}0\blacktriangle\ \phantom{0}1\triangledown\ 9=$ & & 0.8407 \\
\bottomrule
\end{tabular}
\end{table}





The first important conclusion we obtain from the results in Table~\ref{tab:nkq-comparison} is that DRILS performs better with DPX than with any other crossover operator. There is no single NKQ Landscapes instance in our experimental setting where another crossover operator outperforms DPX included in DRILS. There are only a few instances (8 in total) where APX and/or PX show a similar performance. We can observe in Figure~\ref{fig:drils-quality-nk} (a) that DRILS+DPX obtains the best average quality at any time during the search when $K=2$, followed by PX and APX. UX and NX provide the worst average quality in this set of instances. We observe in the figure signs of convergence in all the crossover operators. However, after a careful analysis checking the time of last improvement, whose distribution is presented in Figure~\ref{fig:tli-vioplot-k2} (a), we notice that DRILS with DPX, APX and PX provides improvements to the best solution after 50 seconds in around 50\% of the runs, while DRILS with UX and NX seems to stuck in 30 to 40 seconds after the start of the search, and much earlier in some cases. We wonder if this time could be biased by the different runtime of crossover operators. Perhaps the algorithm produces the last improvement near the end of the execution for DPX, APX and PX but the previous one was in the middle of the run, earlier than the last improvement of NX and UX. To investigate this, we show in 
Figure~\ref{fig:tli-vioplot-k2} (b) the distribution of the average time between improvements for the last three improvements. This time is far below one second in most of the cases for all the crossover operators, which means that they are producing better solutions several times per second on average until the time of last improvement, and there is no bias related to the different crossover runtime.

\begin{figure}[!ht]
\centering
\subfloat[][$K=2$]{
	\includegraphics[width=0.45\textwidth]{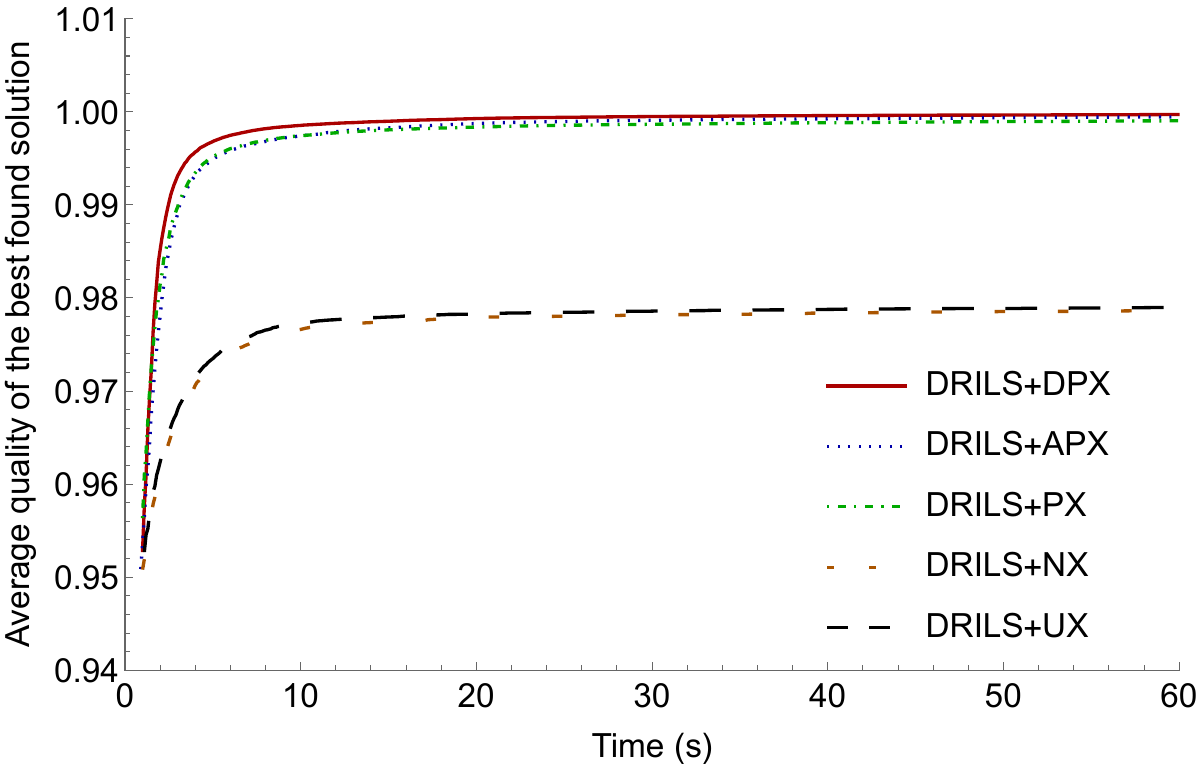}
}
\qquad
\subfloat[][$K=5$]{
	\includegraphics[width=0.45\textwidth]{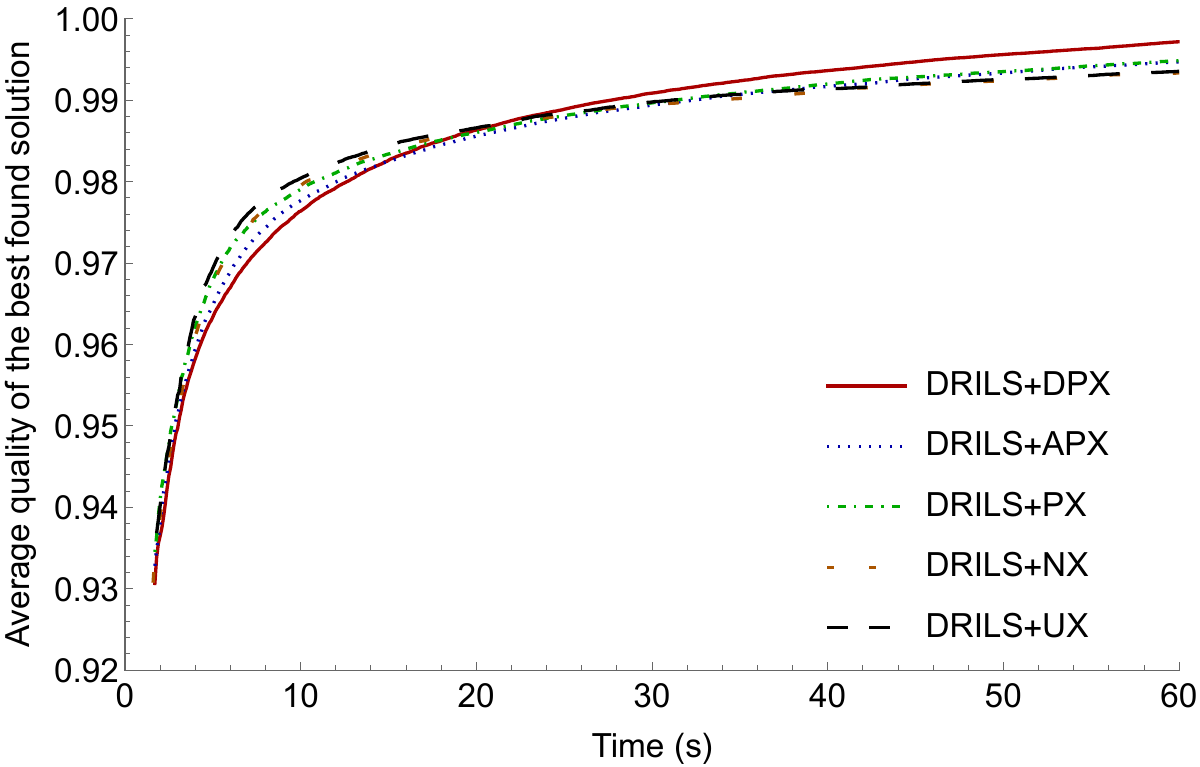}
}
\caption{Average quality of the best found solution at any time for DRILS using different crossover operators solving NKQ Landscapes.}
\label{fig:drils-quality-nk}
\end{figure}

%

\begin{figure}[!ht]
\centering
\subfloat[][Time of last improvement (in seconds)]{
	\includegraphics[width=0.45\textwidth]{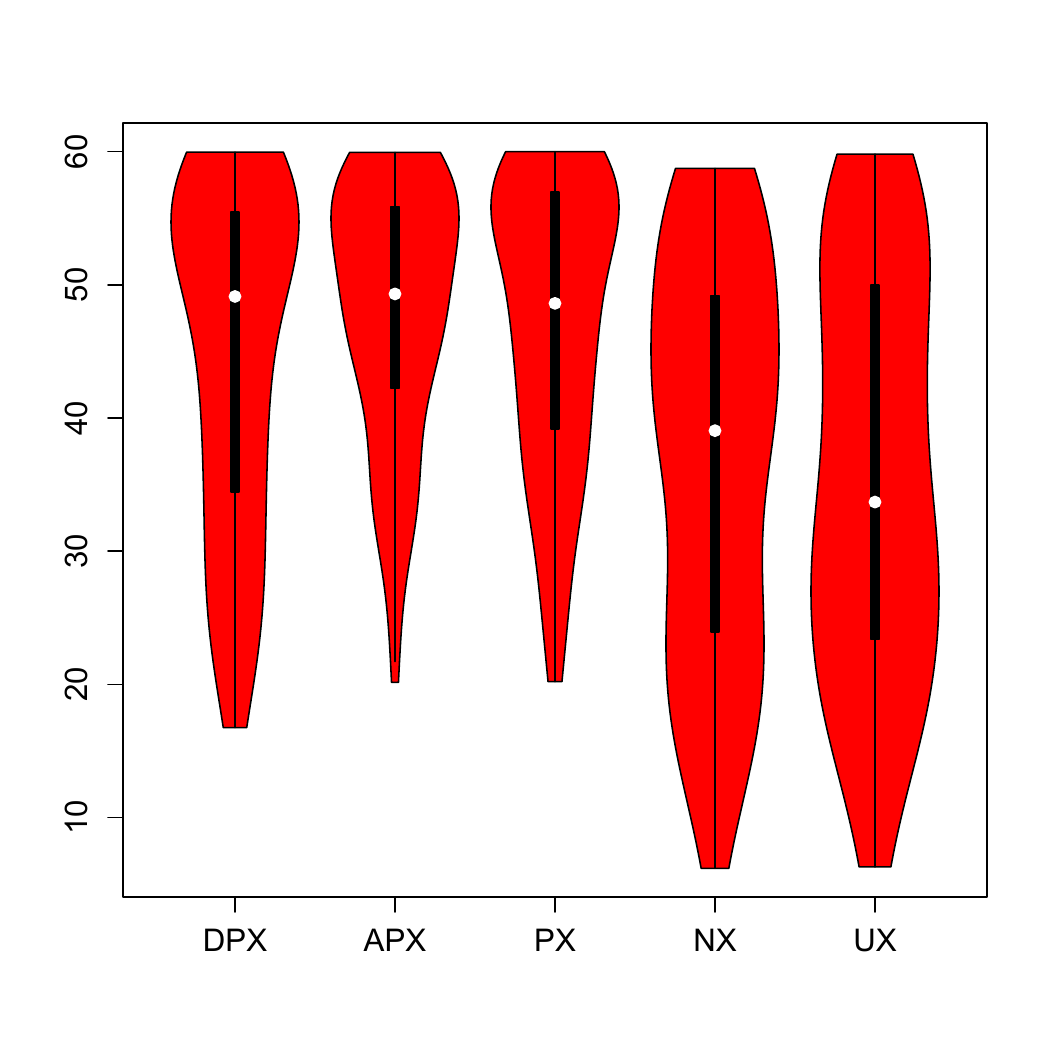}
}
\qquad
\subfloat[][Average time between the three last improvements (in seconds)]{
	\includegraphics[width=0.45\textwidth]{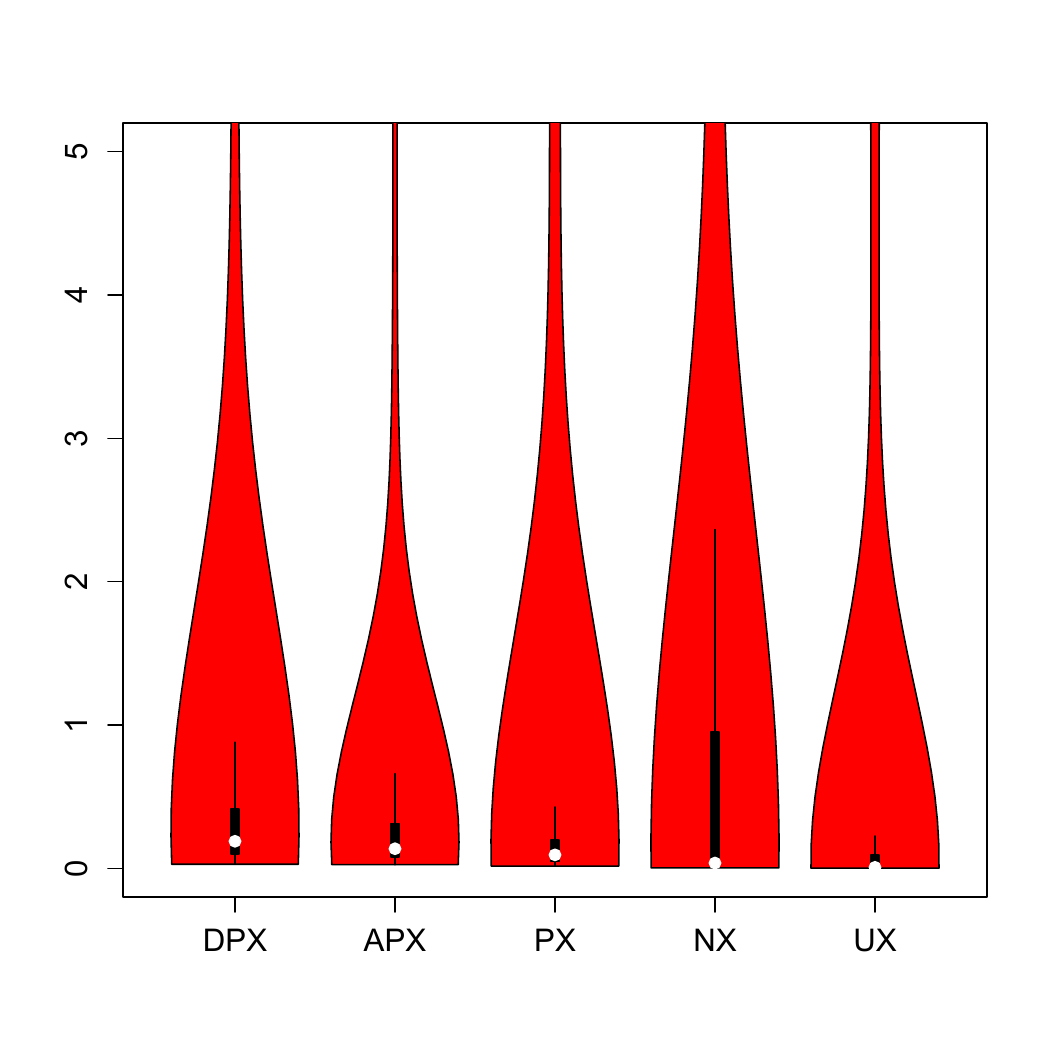}
}
\caption{Probability distribution of the time of last improvement to the best found solution from the start of the search (a) and the average time between improvements for the three last improvements (b) in DRILS with the different crossover operators when solving NKQ Landscapes with $K=2$.}
\label{fig:tli-vioplot-k2}
\end{figure}

In the more rugged instances ($K=5$), shown in Figure~\ref{fig:drils-quality-nk} (b), DRILS+DPX is the best performing algorithm after 20 seconds of computation. Before that time DRILS+UX provides the best performance. We can explain this with the help of Table~\ref{tab:crossover-comp-runtime-10K}. UX is the fastest crossover operator, and helps to advance in the search at the beginning. DPX (as well as PX and APX) are slower operators and, even if they provide better quality offspring, they slow down the search, requiring more time to be effective. We have to recall here that DRILS includes a hill climber, what explains why using a random black box operator like UX the quality of the best solution is still high (above 0.93).

If we analyze the performance of the crossover operators in EA, we observe that DPX is also the best crossover operator for the instances with $K=2$. However, when $K=5$ DPX is outperformed by PX and, in general, shows a performance similar to APX, NX and UX. The best crossover operator in EA when $K=5$ is PX. Taking a look to Figure~\ref{fig:ea-quality-nk} (a) we observe that EA+PX and EA+APX provide the highest average quality of the best solution during the first 35 to 40 seconds, then they are surpassed by DPX. 
The reason for this slow behaviour of EA+DPX is again the high runtime of DPX, which in the case of EA is higher than in the case of DRILS due to the fact that the solutions in the initial population  are random and, thus, differ in around $h= 0.5 n = \numprint{5000}$ bits.
This slow runtime is specially critical when $K=5$ (Figure~\ref{fig:ea-quality-nk} (b)), where EA+DPX is not able to reach the average quality of EA+PX in 60 seconds. 
In Figure~\ref{fig:ea-drils-runtime-nk-k5} we plot the time required by DPX during the search when it is included in DRILS and EA. The runtime of DPX in DRILS is a few milliseconds because the parent solutions differ in around $\alpha n = 462$ bits ($\alpha=0.0462$ according to Table~\ref{tab:nkq-irace}), while the runtime of DPX in EA starts in six seconds and goes down to one second at the end of the search. This behaviour suggests that in an EA a hybrid approach combining PX at the beginning of the search and DPX later during the search could be a better strategy to reach better quality solutions in a short time.
The random crossover operators, UX and NX, show a poorer performance in EA compared to DRILS probably because there is no local search in EA.

\begin{figure}[!ht]
\centering
\subfloat[][$K=2$]{
	\includegraphics[width=0.45\textwidth]{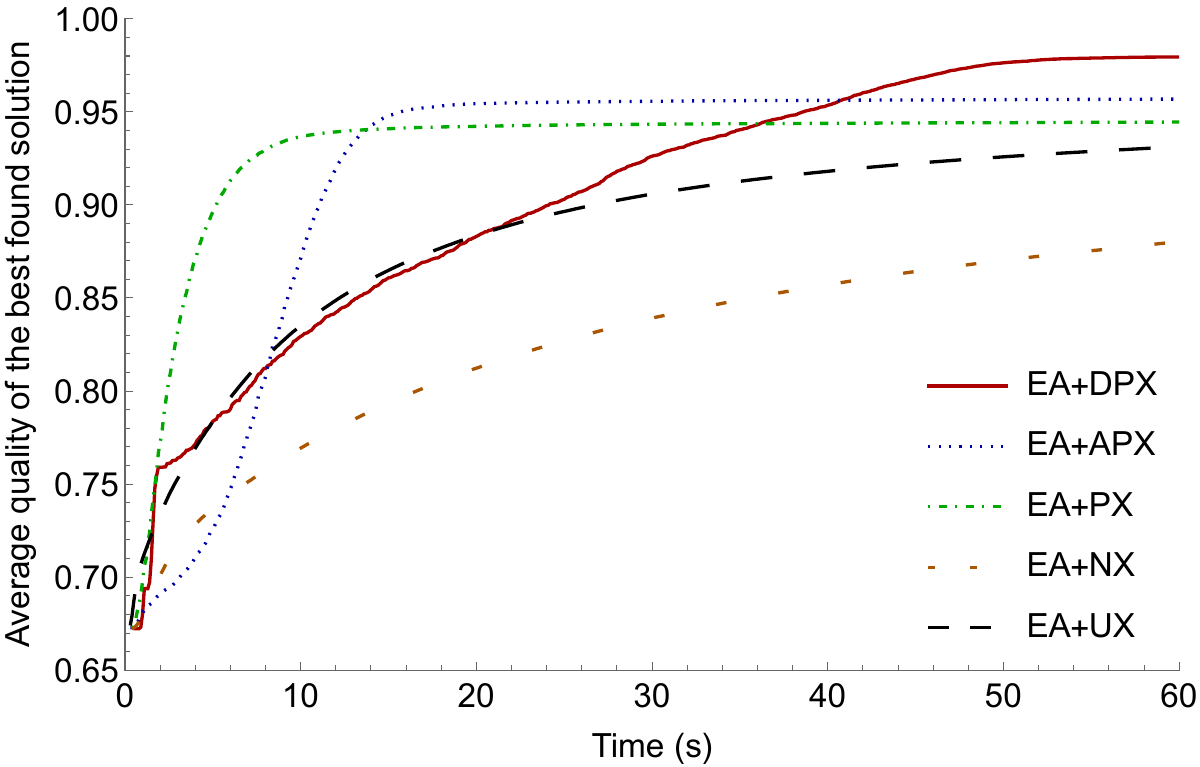}
}
\qquad
\subfloat[][$K=5$]{
	\includegraphics[width=0.45\textwidth]{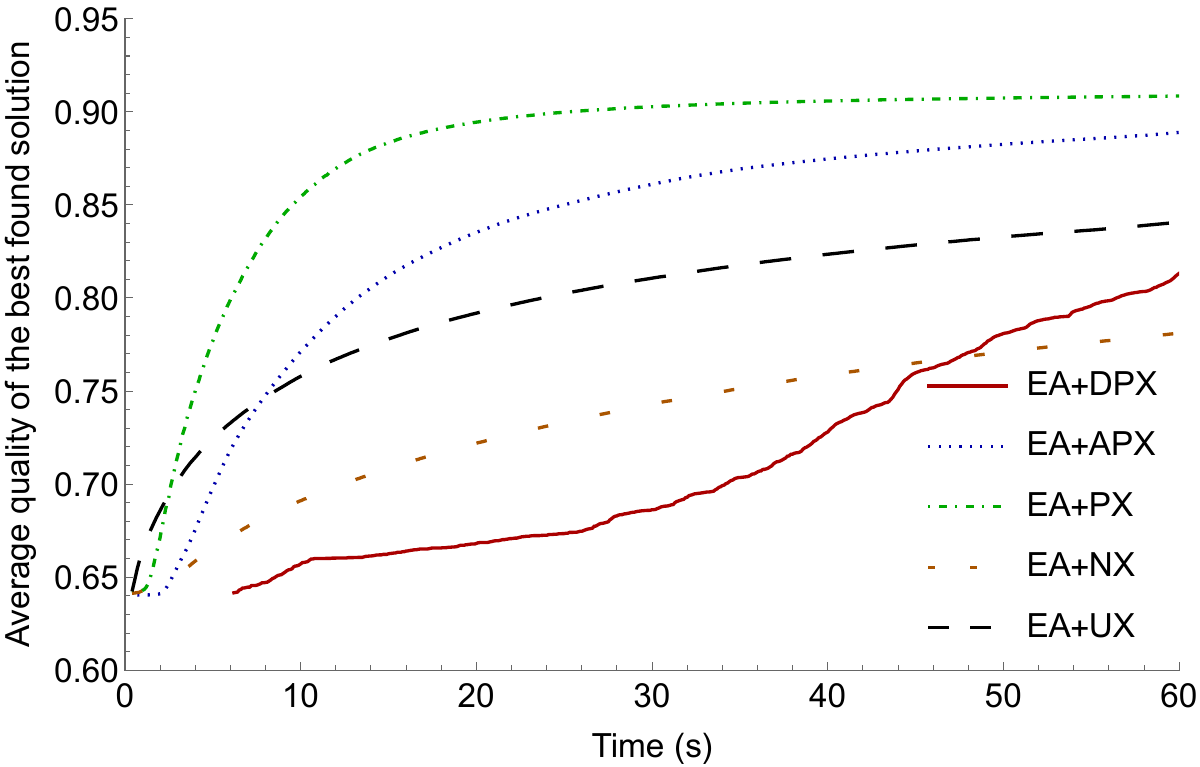}
}
\caption{Average quality of the best found solution at any time for EA using different crossover operators solving NKQ Landscapes.}
\label{fig:ea-quality-nk}
\end{figure}

%

\begin{figure}[!ht]
\centering
\includegraphics[width=0.5\textwidth]{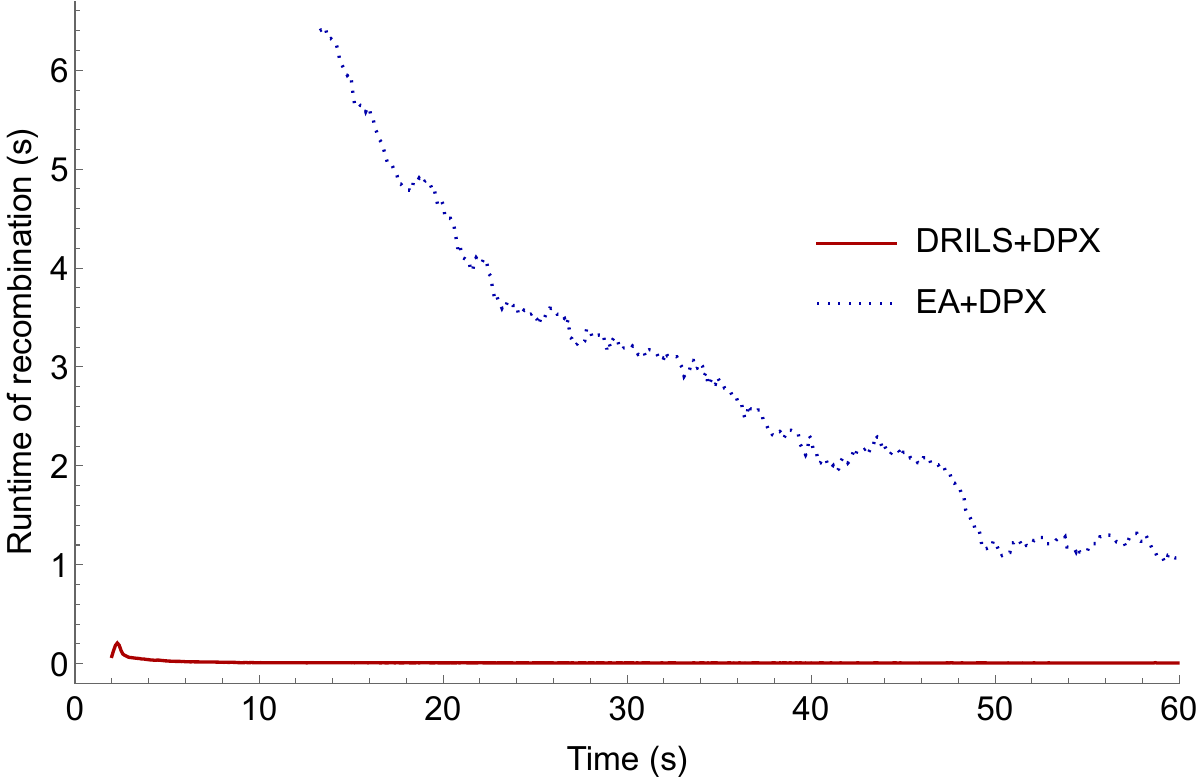}
\caption{Runtime (in seconds) of DPX as the search progresses in DRILS and EA solving NKQ Landscapes with $K=5$.}
\label{fig:ea-drils-runtime-nk-k5}
\end{figure}

Finally, although it is not our goal to compare search algorithms (only crossover operators) we would like to highlight some observations regarding the average quality of the best found solutions in DRILS and EA. We conclude that DRILS is always better than EA. For $K=2$, the highest quality in EA is obtained when DPX is used and its quality is only slightly higher than that of DRILS with NX and UX (the worst performing crossover operators in DRILS). For $K=5$ the difference is even higher: EA+PX reaches an average quality of 0.9085 (highest for EA) which is far below any average quality of DRILS, all above 0.9934 (the one of DRILS+NX). We think that the reason for that could be the presence of a local search operator in DRILS, while EA is mainly guided by selection and crossover (when PX, APX or DPX is used).

\subsubsection{Results for MAX-SAT}
\label{subsec:maxsat}

In this section we analyze the results obtained for MAX-SAT. We also use the quality of the solutions defined in Section~\ref{subsec:nkq} as a normalized measure of quality.
Table~\ref{tab:maxsat-comparison} presents the main results of the section. The meaning of the columns is the same as in Table~\ref{tab:nkq-comparison}. 
In the case of DRILS all the instances are used in the statistical tests and the computation of the average quality (160 instances in the unweighted category and 132 instances in the weighted category).
In the case of EA, we observed that it failed to complete the execution in some runs for some instances when it was combined with DPX. The reason was an out of memory problem, caused by the large number of differing variables among the solutions in the initial generations. In this case, we only computed the average quality for instances in which at least 90\% of the runs were successful (nine of the ten runs) and we manually counted the instances with less than 90\% successful runs as significantly worse than EA+DPX for the remaining EA+crossover combinations in Table~\ref{tab:maxsat-comparison} without performing any statistical test. Nine unweighted instances and five weighted instances had less than 90\% successful EA+DPX runs. Three unweighted instances and no weighted instance had exactly 90\% of successful runs and in the remaining instances EA+DPX ends successfully in all the runs.

\begin{table}[!ht]
\caption{Performance of the five recombination operators used in DRILS and EA when solving MAX-SAT instances.
The symbols $\blacktriangle$, $\triangledown$ and $=$ are used to indicate that the use of the crossover operator in the row yields statistically better, worse or similar results than the use of DPX.
}
\label{tab:maxsat-comparison}
\centering
\begin{tabular}{@{}rcccrcccr@{}}
\toprule
&  & \multicolumn{3}{c}{Unweighted} &  & \multicolumn{3}{c}{Weighted} \\
\cmidrule{3-5} \cmidrule{7-9}
& & \multicolumn{1}{c}{Statistical difference} & & Quality & & \multicolumn{1}{c}{Statistical difference}  & & Quality \\
\midrule
\multicolumn{1}{l}{DRILS} \\
{\scriptsize DPX} & &  & & 0.9984 & &  & & 0.9996 \\
{\scriptsize APX} & & $14\blacktriangle\ \phantom{0}91\triangledown\ 55=$ & & 0.9973 & & $15\blacktriangle\ \phantom{0}86\triangledown\ 31=$ & & 0.9984 \\
{\scriptsize PX} & & $\phantom{0}8\blacktriangle\ 103\triangledown\ 49=$ & & 0.9968 & & $25\blacktriangle\ \phantom{0}80\triangledown\ 27=$ & & 0.9982 \\
{\scriptsize NX} & & $\phantom{0}2\blacktriangle\ 126\triangledown\ 32=$ & & 0.9946 & & $\phantom{0}1\blacktriangle\ 126\triangledown\ \phantom{0}5=$ & & 0.9915 \\
{\scriptsize UX} & & $\phantom{0}0\blacktriangle\ 124\triangledown\ 36=$ & & 0.9953 & & $\phantom{0}1\blacktriangle\ 126\triangledown\ \phantom{0}5=$ & & 0.9930 \\
\midrule
\multicolumn{1}{l}{EA} \\
{\scriptsize DPX} & &  & & 0.9644 & &  & & 0.9583 \\
{\scriptsize APX} & & $52\blacktriangle\ \phantom{0}68\triangledown\ 40=$ & & 0.9604 & & $43\blacktriangle\ \phantom{0}63\triangledown\ 26=$ & & 0.9649 \\
{\scriptsize PX} & & $17\blacktriangle\ 107\triangledown\ 36=$ & & 0.9095 & & $\phantom{0}8\blacktriangle\ 109\triangledown\ 15=$ & & 0.9057 \\
{\scriptsize NX} & & $18\blacktriangle\ 101\triangledown\ 41=$ & & 0.8980 & & $18\blacktriangle\ 103\triangledown\ 11=$ & & 0.8786 \\
{\scriptsize UX} & & $27\blacktriangle\ \phantom{0}96\triangledown\ 37=$ & & 0.9134 & & $18\blacktriangle\ \phantom{0}99\triangledown\ 15=$ & & 0.8989 \\
\bottomrule
\end{tabular}
\end{table}





From the results in Table~\ref{tab:maxsat-comparison} we conclude that both algorithms (DRILS and EA) perform better, in general, using DPX as the crossover operator. Only in very few cases any other crossover operator improves the final result of DRILS. 
When EA is used, the difference is not so clear, but still significant.

Once again, we also observe that the performance of DRILS is better than that of EA. The maximum average quality in EA is 0.9649 (EA+APX in the weighted instances) while the average quality of DRILS is always above 0.9915 for all the crossover operators and category of instances.

We do not expect DRILS or EA to be competitive with state-of-the-art incomplete MAX-SAT solvers like SATLike-c\footnote{SATLike-c was the winner in the unweighted incomplete track of the MAX-SAT Evaluation 2021 and got third position in the weighted incomplete track.}~\citep{CAI2020103354}, because they are general optimization algorithms. However, DPX could be useful to improve the performance of some incomplete MAX-SAT solvers, as PX did in recent work~\citep{Chen2018gecco}.

\subsubsection{DPX alone as search algorithm}
\label{subsec:isodpx}

In this section we analyze the results of DPX alone used to solve pseudo-Boolean optimization problems. We apply DPX to a random solution and its complement, with the goal of finding the global optimum. Due to technical limitations of our current implementation of DPX, we cannot set $\beta=\infty$, but we use the maximum value of $\beta$ allowed by the implementation, which is 28. This also means that if the whole search space is not explored for a concrete instance the result could depend on the initial (random) solution. For this reason, we run DPX ten times per instance on different random solutions. We set a runtime limit of 12 hours.

After applying DPX to the 20 instances of NKQ Landscapes (ten instances for each value of $K$), we found that all the runs were prematurely terminated due to an out of memory error (the memory limit was set to 5GB as in the previous experiments).
In the case of the MAX-SAT instances, the runs finished in less than 12 hours for eight unweighted instances and two  weighted instances. In 152 unweighted instances and 128 weighted ones, DPX ran out of memory. In two weighted instances, we stopped DPX after 12 hours of computation. Table~\ref{tab:dpx28-maxsat} shows the MAX-SAT instances that finished without error and, for each one, it shows the average and maximum number of satisfied clauses in the ten independent runs of DPX, the average runtime (in seconds) and the logarithm in base 2 of the implicitly explored solutions (it is the same in all runs). The minimum number of satisfied clauses found in all the runs of DRILS+DPX in less than 60 seconds is shown for comparison in the last column. We mark with an asterisk in the logarithm of implicitly explored solutions the runs that fully explored the search space (thus, DPX was able to certify the global optimum).


\begin{table}[!ht]
\caption{Results of DPX with $\beta=28$ for the MAX-SAT instances that finished in less than 12 hours without memory error. The minimum number of satisfied clauses found in all the runs of DRILS+DPX in less than 60 seconds is shown for comparison in the last column. The asterisk means that the whole search space was explored and the number of satisfied clauses is the global maximum.}
\label{tab:dpx28-maxsat}
\centering
\begin{tabular}{@{}lcrcrcrcrcr@{}}
\toprule
Instance & & \multicolumn{3}{c}{Satisfied clauses} & & \multicolumn{1}{c}{Time} & & Explored & & DRILS \\
\cmidrule{3-5}
         & & \multicolumn{1}{c}{avg} & & \multicolumn{1}{c}{max}       & &    \multicolumn{1}{c}{(s)}      & &   \multicolumn{1}{c}{($\log_2$)}           & &  +DPX         \\
\midrule
\multicolumn{11}{c}{Unweighted instances} \\
bcp-msp-normalized-ii16c1 & &  \numprint{15690} & & \numprint{15848} & & \numprint{9401} & & 49 & & \numprint{20266} \\
maxcut-brock200\_1.clq & & 889 & & 892 & & \numprint{8253} & & 32 & & 894 \\
maxcut-brock400\_2.clq & & 929 & & 933 & & \numprint{6387} & & 32 & & 936 \\
maxcut-brock800\_2.clq & & 809 & & 816 & & \numprint{8572} & & 31 & & 819 \\
maxcut-p\_hat1000-2.clq & & 600 & & 600 & & 375 & & *40 & & 600 \\
maxcut-p\_hat700-3.clq & & 986 & & 989 & & \numprint{7617} & & 32 & & 993 \\
maxcut-san400\_0.5\_1.clq & & 644 & & 644 & & 990 & & *40 & & 644 \\
maxcut-san400\_0.9\_1.clq & & 1047 & & 1050 & & \numprint{8589} & & 30 & & 1052 \\
\midrule
\multicolumn{11}{c}{Weighted instances} \\
maxcut-johnson8-2-4.clq & &  \numprint{2048} & & \numprint{2048} & & 144 & & *28 & & \numprint{2048} \\
maxcut-sanr200\_0.9.clq & & \numprint{5994} & & \numprint{6027} & & \numprint{10680} & & 31 & & \numprint{6036} \\
\bottomrule
\end{tabular}
\end{table}

The main conclusion is that DPX alone is not good compared to its combination with a search algorithm. It requires a lot of memory and time to compute the result. Even when DPX finishes, the results are always outperformed by DRILS+DPX (see last column of Table~\ref{tab:dpx28-maxsat}).

In the previous experiment, we forced DPX to use a lot of memory and runtime because the value of $\beta$ was high. One of the advantages of DPX compared to an optimal recombination operator is that we can limit the time and space complexity of DPX using $\beta$. We wonder if DPX alone with a low value of $\beta$ could outperform DRILS or EA. In order to answer this question we designed a new experiment in which we run a new algorithm, called iDPX (iterated DPX) which consists in the iterated application of DPX over a random solution and its complement. The algorithm stops when a predefined time limit is reached (one minute in our case). We used irace to tune iDPX. The only parameter to tune is $\beta$, and we used the same configuration of irace that we used to tune DRILS and EA. According to irace, the best value for $\beta$ is 5 for both, NKQ Landscapes and MAX-SAT.
We run iDPX ten times per instance and compared it with DRILS and EA using the five crossover operators. 

In NKQ Landscapes, iDPX was statistically significantly worse than all the remaining ten algorithms (DRILS and EA with the five crossover operators) for all the instances. The average quality of the solutions obtained by iDPX was 0.7543 for $K=2$ and 0.6640 for $K=5$, which are very low values compared to the ones obtained by the other algorithms (see Table~\ref{tab:nkq-comparison}).

In MAX-SAT, iDPX prematurely finished with out of memory errors in 59 unweighted instances and 25 weighted ones. In these cases, we ran iDPX with $\beta=0$ to alleviate the memory requirements but the algorithm ran out of memory again. When iDPX does not have a result for an instance due to an error, we mark it as statistically significant worse than the other algorithms that finished with a result and equal to the algorithms that did not finished (EA+DPX does not finish in some instances). In Table~\ref{tab:maxsat-isodpx} we compare iDPX with the other ten algorithms, providing the same metrics as in Section~\ref{subsec:maxsat}. The average quality values change with respect to the ones in Table~\ref{tab:maxsat-comparison}. The reason is that the instances used for the average computation in Table~\ref{tab:maxsat-isodpx} is a subset of the instances used in Table~\ref{tab:maxsat-comparison}, due to the memory errors of iDPX.

\begin{table}[!ht]
\caption{Performance comparison of iDPX, DRILS and EA when solving MAX-SAT instances.
The symbols $\blacktriangle$, $\triangledown$ and $=$ are used to indicate that the algorithm in the row yields statistically better, worse or similar results than the use of iDPX.
}
\label{tab:maxsat-isodpx}
\centering
\begin{tabular}{@{}rcccrcccr@{}}
\toprule
&  & \multicolumn{3}{c}{Unweighted} &  & \multicolumn{3}{c}{Weighted} \\
\cmidrule{3-5} \cmidrule{7-9}
& & \multicolumn{1}{c}{Statistical difference} & & Quality & & \multicolumn{1}{c}{Statistical difference}  & & Quality \\
\midrule
\multicolumn{1}{l}{iDPX} & &  & & 0.8772 & &  & & 0.8516 \\
\midrule
\multicolumn{1}{l}{DRILS} \\
{\scriptsize DPX} & & $160\blacktriangle\ \phantom{0}0\triangledown\ \phantom{0}0=$ & & 0.9992 & & $132\blacktriangle\ \phantom{0}0\triangledown\ \phantom{0}0=$ & & 0.9997 \\
{\scriptsize APX} & & $160\blacktriangle\ \phantom{0}0\triangledown\ \phantom{0}0=$ & & 0.9979 & & $132\blacktriangle\ \phantom{0}0\triangledown\ \phantom{0}0=$ & & 0.9984 \\
{\scriptsize PX}  & & $160\blacktriangle\ \phantom{0}0\triangledown\ \phantom{0}0=$ & & 0.9981 & & $132\blacktriangle\ \phantom{0}0\triangledown\ \phantom{0}0=$ & & 0.9982 \\
{\scriptsize NX}  & & $160\blacktriangle\ \phantom{0}0\triangledown\ \phantom{0}0=$ & & 0.9966 & & $130\blacktriangle\ \phantom{0}1\triangledown\ \phantom{0}1=$ & & 0.9907 \\
{\scriptsize UX}  & & $160\blacktriangle\ \phantom{0}0\triangledown\ \phantom{0}0=$ & & 0.9967 & & $130\blacktriangle\ \phantom{0}1\triangledown\ \phantom{0}1=$ & & 0.9925 \\
\multicolumn{1}{l}{EA} \\
{\scriptsize DPX} & & $150\blacktriangle\ \phantom{0}0\triangledown\ 10=$           & & 0.9869 & & $126\blacktriangle\ \phantom{0}0\triangledown\ \phantom{0}6=$ & & 0.9700 \\
{\scriptsize APX} & & $148\blacktriangle\ \phantom{0}1\triangledown\ 11=$           & & 0.9675 & & $126\blacktriangle\ \phantom{0}0\triangledown\ \phantom{0}6=$ & & 0.9676 \\
{\scriptsize PX}  & & $131\blacktriangle\ 16\triangledown\ 13=$                     & & 0.9344 & & $112\blacktriangle\ \phantom{0}3\triangledown\ 17=$           & & 0.9155 \\
{\scriptsize NX}  & & $128\blacktriangle\ 20\triangledown\ 12=$                     & & 0.9156 & & $\phantom{0}95\blacktriangle\ 27\triangledown\ 10=$           & & 0.8836 \\
{\scriptsize UX}  & & $131\blacktriangle\ 19\triangledown\ 10=$                     & & 0.9334 & & $120\blacktriangle\ \phantom{0}4\triangledown\ \phantom{0}8=$ & & 0.9063 \\
\bottomrule
\end{tabular}
\end{table}

We observe in Table~\ref{tab:maxsat-isodpx} that iDPX is statistically worse than DRILS+DPX, DRILS+APX and DRILS+PX in all the instances. iDPX does not outperform EA+DPX in any case (it is equivalent in 16 instances). Furthermore, iDPX is worse in most of the instances for algorithms not using DPX (e.g., EA+UX). Thus, our main conclusion of this section is that DPX alone is not competitive with any search algorithm and should only be used as a crossover operator inside a search algorithm. DPX can provide the global optimum only in the cases where the number of variables is low or the VIG has a low number of edges. This is hardly the case in NP-hard problems (it only happened in three MAX-SAT instances of our benchmark).

\subsection{Local optima network analysis of DRILS+DPX}
\label{subsec:lon}

The best algorithm+crossover combination identified in Section~\ref{subsec:algorithms-comp} is DRILS+DPX, for NKQ Landscapes and MAX-SAT. In this section we conduct a local optima network (LON) analysis~\citep{lon-gecco08,Ochoa2015gecco} in order to better understand the search dynamics of DRILS+DPX. 
A LON is a graph where nodes are local optima and edges represent search transitions among them with a given operator. DRILS has two transition operators, crossover (DPX) and perturbation (each followed by local search to produce a local optimum), which are modelled as two types of edges in the LON: DPX + hill climber and perturbation + hill climber. 
An edge is, respectively, \emph{improving} if its end node has better fitness than its start node, \emph{equal} if the start and end nodes have the same fitness, and \emph{worsening}, if the end node has worse fitness than the start node.  
Our analysis reports the LONs extracted by ten runs of DRILS+DPX solving NKQ Landscapes instances with $n=\numprint{10000}$ and the two extreme values of $K$ used in our experiments (2 and 5). The parameters used are the ones in Table~\ref{tab:irace-tuned-params} for this algorithm and value of $K$.
The process of extracting the LON data is as follows. For each of the ten runs per instance, we record every unique local optima and edge encountered, from the start until the end of the run. As it is done in previous work \citep{OchoaV-joh18}, we combined the search trajectories produced by the ten runs as a sampling process to construct a single LON and our analysis revealed that there are no overlapping nodes and edges across the runs. Therefore, the combined LONs contain ten connected components, one characterizing each run. 
Table~\ref{tab:lon-stats} reports some basic LON statistics, specifically, the number of nodes (local optima) and the number edges of each type, crossover (DPX) and perturbation, grouped as improving, equal and worsening edges. They are statistics over the ten different instances (and LONs). There cannot be a worsening crossover edge, by design of DPX, but we include the column for the sake of completeness. 
The number of nodes and edges sampled is similar for the different values of $K$. Although DPX is slower for $K=5$, its perturbation factor $\alpha=0.0462$ is lower than in the case of $K=2$. 
With the parameters provided by irace, DRILS+DPX visits a similar number of unique local optima during the search for both values of $K$.
It is interesting to note that most of the crossover edges are improving, while most of the perturbation edges are deteriorating. This suggests that the role of perturbation within DRILS is to provide diversity as a raw material for DPX, which then incorporates newly found variable assignments that improve fitness. 

\begin{table}[!ht]
\caption{LON Statistics for NKQ Landscapes with $n=$\numprint{10000} variables. Average (avg) and standard deviation (std) of the ten instances for each value of $K$ are shown.}
\label{tab:lon-stats}
\centering
\begin{tabular}{@{}rrrrrcrrr@{}}
\toprule
\multicolumn{2}{c}{} & \multicolumn{3}{c}{Crossover Edges} &\multicolumn{1}{c}{}  & \multicolumn{3}{c}{Perturbation Edges} \\
\cline{3-5}
\cline{7-9}
  & \multicolumn{1}{c}{Nodes} & \multicolumn{1}{c}{improv.} & \multicolumn{1}{c}{equal}  & \multicolumn{1}{c}{worse.} & & \multicolumn{1}{c}{improv.} & \multicolumn{1}{c}{equal}  & \multicolumn{1}{c}{worse.}  \\
\midrule
$K=2$ & & &   \\
\scriptsize avg & $\numprint{24361.7}$ & $\numprint{20490.7}$ & $\numprint{3676.9}$ & $0.0$ & & $10.8$  & $0.0$ & $\numprint{12263.4}$ \\ 
\scriptsize std  & $\numprint{2387.7}$ & $\numprint{1970.6}$ & $\numprint{1422.5}$ & $0.0$ & & $0.6$  & $0.0$ & $\numprint{1194.9}$ \\ 
$K=5$ & & &   \\ 
\scriptsize avg & $\numprint{23371.8}$ & $\numprint{22697.6}$ & $123.0$ & $0.0$ & & $53.2$ & $0.0$ & $\numprint{11898.8}$\\   
\scriptsize std  & $\numprint{3137.5}$ & $\numprint{3033.4}$ & $15.0$ & $0.0$ & & $5.4$ & $0.0$ & $\numprint{1612.8}$\\   
\bottomrule					
\end{tabular}
\end{table}

%
%
%

\begin{figure}[!ht]
\centering
\subfloat[][Start neigborhood, $K=2$]{
	\includegraphics[width=0.45\textwidth]{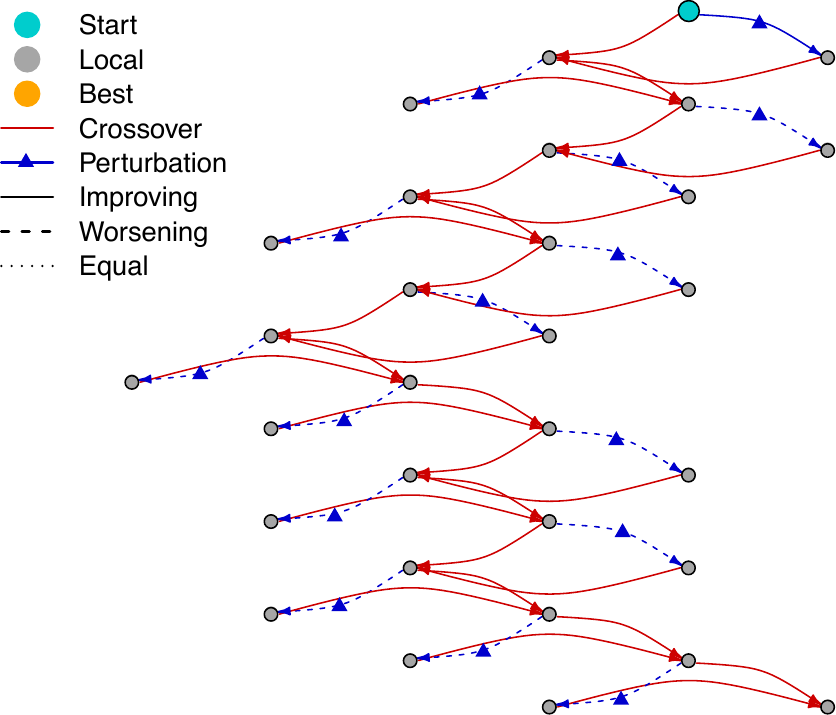}
}
\qquad
\subfloat[][Start neigborhood, $K=5$]{
	\includegraphics[width=0.45\textwidth]{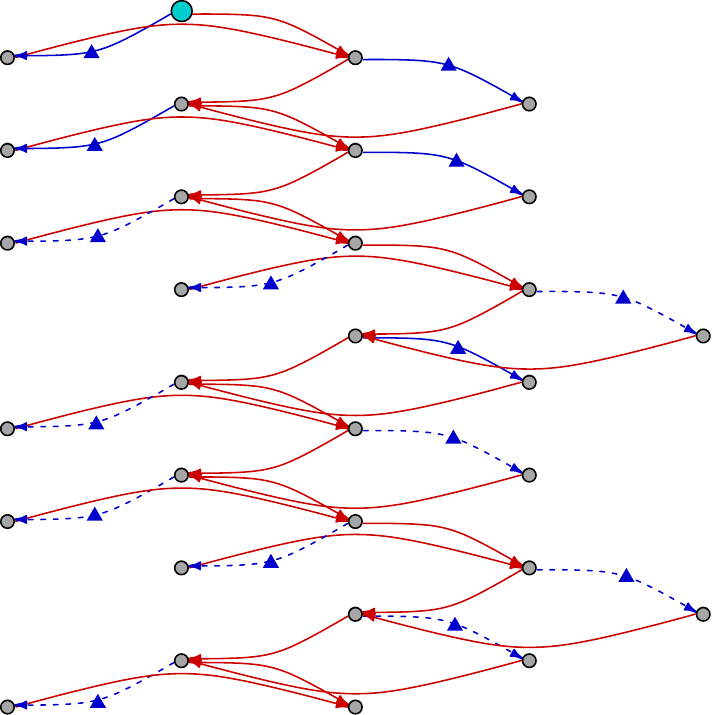}
}

\subfloat[][End neigborhood, $K=2$]{
	\includegraphics[width=0.45\textwidth]{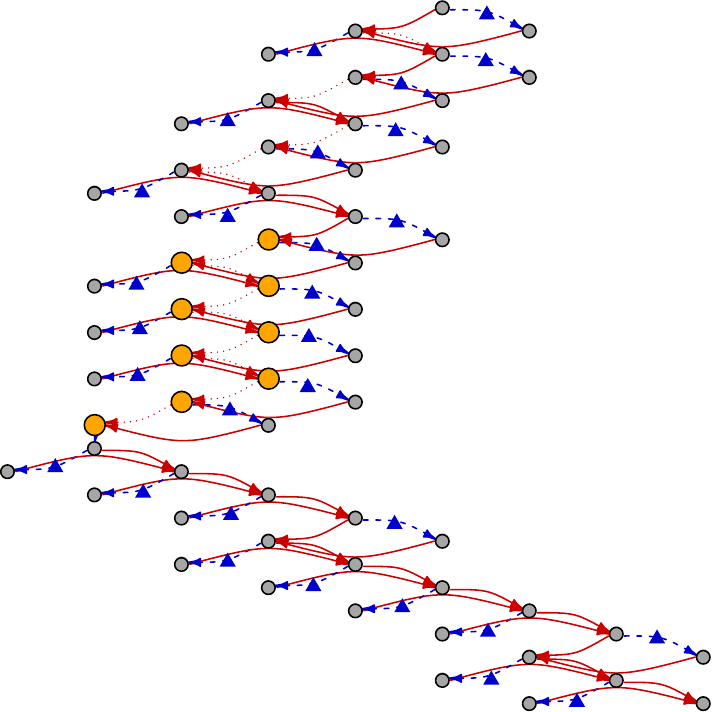}
}
\qquad
\subfloat[][End neigborhood, $K=5$]{
	\includegraphics[width=0.45\textwidth]{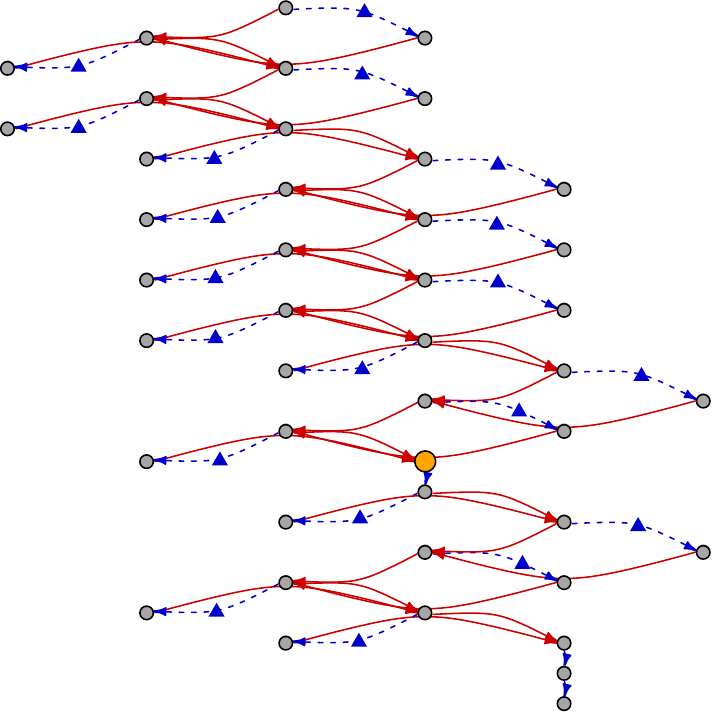}
}
\caption{LONs for two NKQ Landscapes instances with $n=\numprint{10000}$ and $K = 2, 5$ as indicated in the captions. Yellow nodes highlight the best local optima found, while cyan nodes the start local optimum. Red edges correspond to crossover and blue edges with triangular marker to perturbation. Improving edges are solid, worsening edges are dashed and equal edges are dotted.}
\label{fig:lons}
\end{figure}

In order to have a closer look at the search trajectories, Figure \ref{fig:lons} visualizes the LON of a  run obtaining the best solution for two selected instances with $n=\numprint{10000}$ and the two values of $K$. 
The plots use a tree-based graph layout where the root node is the local optimum at the beginning of the run, highlighted with cyan color. We color in yellow the best local optima found. 
Red edges are crossover edges, where each parent adds an edge pointing to the offspring. Blue edges with triangular marker represent perturbation edges. Solid, dashed and dotted line styles are used for improving, worsening and equal edges. Since the complete LONs are large (more than \numprint{2000} nodes in each run), we only show the neighbourhood around the starting and best node. 
The search process is very similar in structure for the two instances and reflects the working of the algorithm. However, we also observe some remarkable differences for different values of $K$. We observe more improving perturbation edges when $K=5$ than when $K=2$. In fact, only one improving perturbation edge appears for $K=2$, what means that it is difficult to improve a solution provided by DPX in the near neighborhood reached by the perturbation operator followed by local search. We also should highlight that improving perturbation edges appear only at the beginning of the search in both cases ($K=2$ and $K=5$). After a few steps, only DPX provides improvements over the best found solution. We also see that many different best local optima are found in $K=2$ joined by equal crossover edges, while only one appears in the instance with $K=5$. DPX shows here an ability to jump from one best local optimum to another. No equal crossover edges appear near the end node when $K=5$: once the best local optimum is found, a worsening perturbation edge makes the search to escape from it.

\section{Conclusions}
\label{sec:conclusions}

This paper proposes a new gray box crossover operator, dynastic potential crossover (DPX), with the ability to obtain a best offspring out of the full dynastic potential if the density of interactions among the variables is low, where it can behave like an optimal recombination operator. We have provided theoretical results proving that DPX, when recombining two parents, generates an offspring no worse than partition crossover (PX) and usually no worse than articulation points partition crossover (APX). We also performed a thorough experimental study to compare DPX with four crossover operators, including PX and APX, and using two different algorithms: DRILS and EA. We used NKQ Landscapes and MAX-SAT instances in the comparison.
We conclude that DPX offers a great exploration ability, providing offspring that are much better than their parents, compared to the other crossover operators. However, this ability has a cost in runtime and memory consumption, the main drawback of the operator. Included in DRILS, DPX outperforms all the other crossover operators in NKQ Landscapes and MAX-SAT instances. In the case of EA, it also outperforms the other crossover operators except in the case of NKQ Landscapes with $K=5$, where PX is the best performing operator. Thus, we suggest that a combined use of PX and DPX in EA could be optimal in high epistasis functions.

An interesting future line of research is to analyze the shape of the connected components of the recombination graph to design pre-computed clique trees that could speed up the operator. The code of DPX can also be much optimized when we focus on particular problems, like MAX-SAT. We used in this work a general implementation that can be optimized for particular problems. A problem-specialized version of DPX could be combined with state-of-the-art incomplete MAX-SAT solvers to improve their performance.

\section*{Acknowledgements}

This research is partially funded by the Universidad de M\'alaga, Consejer\'ia de Econom\'ia y Conocimiento de la Junta de Andaluc\'ia and FEDER under grant number UMA18-FEDERJA-003 (PRECOG); under grant PID 2020-116727RB-I00 (HUmove) funded by MCIN/AEI/ 10.13039/501100011033; and TAILOR ICT-48 Network (No 952215) funded by EU Horizon 2020 research and innovation programme.
The work is also partially supported in Brazil by S\~ao Paulo Research Foundation (FAPESP), under grants 2021/09720-2 and \mbox{2019/07665-4}, and National Council for Scientific and Technological Development (CNPq), under grant \mbox{305755/2018-8}.

Special thanks to J.A. Lozano, Sebastian Herrmann and Hansang Yun for providing pointers to algorithms for the clique tree construction, and to the anonymous reviewers for their thorough review and constructive suggestions.


\end{document}